\setlist[enumerate]{itemsep=0pt, topsep=0pt}
\setlist[itemize]{itemsep=0pt, topsep=0pt}
\titlerule\vspace{-2ex}}
\titleformat{\subsection}[runin]
  {\normalfont\normalsize\bfseries}{\thesubsection.}{1ex}{}	
\titlespacing*{\subsection}{0pt}{0.0\baselineskip}{1ex}
\newtheorem{theorem}{Theorem}[section]
\newtheoremstyle{style}
  {\baselineskip} 
  {0em} 
  {\itshape} 
  {} 
  {\bfseries} 
  {.} 
  {.5em} 
  {} 
\theoremstyle{style}
\newtheorem*{theorem*}{Theorem}
\numberwithin{equation}{section}
\newtheorem*{definition*}{Definition}
\newtheorem{example}[theorem]{Example}
\newtheorem*{lemma*}{Lemma}
\newtheorem{prop}[theorem]{Proposition}
\newtheorem*{prop*}{Proposition}
\newtheorem{remark}[theorem]{Remark}
\algrenewcommand\algorithmicrequire{\textbf{Input:}}
\algrenewcommand\algorithmicensure{\textbf{Output:}}
\DeclareMathOperator*{\argmin}{arg\,min}
\DeclareMathOperator{\Cay}{Cay}
\DeclareMathOperator{\diag}{diag}
\title{{\large {\Large E}llipsoid fitting with the {\Large C}ayley transform}}
\author{{\large O}mar {\large M}elikechi\textsuperscript{1}}
\author{{\large D}avid {\large B}. {\large D}unson\textsuperscript{1,2}}
\address{\textsuperscript{1}Department of Statistical Science, Duke University, Durham, NC}
\address{\textsuperscript{2}Department of Mathematics, Duke University, Durham, NC}
\begin{document}

\maketitle

\begin{abstract}
We introduce Cayley transform ellipsoid fitting (CTEF), an algorithm that uses the Cayley transform to fit ellipsoids to noisy data in any dimension. Unlike many ellipsoid fitting methods, CTEF is ellipsoid specific, meaning it always returns elliptic solutions, and can fit arbitrary ellipsoids. It also significantly outperforms other fitting methods when data are not uniformly distributed over the surface of an ellipsoid. Inspired by growing calls for interpretable and reproducible methods in machine learning, we apply CTEF to dimension reduction, data visualization, and clustering in the context of cell cycle and circadian rhythm data and several classical toy examples. Since CTEF captures global curvature, it extracts nonlinear features in data that other machine learning methods fail to identify. For example, on the clustering examples CTEF outperforms $\bm{10}$ popular algorithms.
\end{abstract}


\section{Introduction}\label{sec:intro}


The problem of fitting ellipsoids to data has applications to statistics, data visualization, feature extraction, pattern recognition, computer vision, medical imaging, and robotics \cite{kesaniemi2018, han2019, lin2016, li2004}. Existing fitting methods either do not apply in more than 3 dimensions, cannot fit arbitrary ellipsoids, are not \textit{ellipsoid specific} (meaning they may return solutions that are not ellipsoids), are not invariant under translations and rotations of data, and/or do not perform well when data are noisy or not uniformly distributed over the surface of an ellipsoid. In this paper we introduce \textit{Cayley transform ellipsoid fitting} (CTEF), an ellipsoid fitting method that does all of the above and show via extensive experiments that it often performs better than existing algorithms. In particular, CTEF outperforms other methods when data are noisy or concentrate near a relatively small subset of an ellipsoid rather than uniformly over it, as is common in practice. One disadvantage of CTEF is it is slower than the fastest existing method, but this is likely prohibitive only in real-time applications such as adaptive calibration of 3D sensors \cite{pylvanainen2008}. On both real and simulated data CTEF routinely converges in under $1$ second running Python on a standard 2019 MacBook Air (\Cref{sec:runtime}).


\subsection*{Related work.}\label{sec:related}


Ellipsoid fitting algorithms are classified as \textit{geometric} or \textit{algebraic} \cite{markovsky2004}. Geometric methods attempt to solve a highly non-convex optimization problem fraught with computational and stability issues, especially when data are noisy or \textit{nonuniform}\footnote{Throughout this paper \textit{nonuniform} means data concentrate near a relatively small subset of an ellipsoid rather than uniformly over it.} \cite{lin2016}. CTEF and all other methods considered in this paper are algebraic. We introduce and discuss each of these in \Cref{sec:experiments}.  

\textit{Ellipsoid detection} is a related but different problem from ellipsoid fitting common in computer vision. The former aims to find elliptic patterns in data, while the latter aims to determine the ellipsoid parameters that best fit the given data, usually by minimizing some loss. The generalized Hough transform \cite{ballard1981} is a popular tool for ellipsoid detection, but its memory and computational demands make it largely intractable for fitting ellipsoids in more than $2$ dimensions \cite{antolovic2008,basca2005,bennett1999}. Some progress has been made toward a practical Hough transform-based method for $3$-dimensional ellipsoids, but these are restricted in the types of ellipsoids they can fit \cite{hsu1990,camurri2014}.


\subsection*{Organization of paper.}\label{sec:organization}


In \Cref{sec:method} we introduce CTEF and establish key properties. In \Cref{sec:experiments} we compare CTEF to other ellipsoid fitting methods in multiple settings and empirically assess parameter sensitivity and runtime. In \Cref{sec:applications} we apply CTEF to dimension reduction, data visualization, and clustering. This builds in part on work in \cite{paul2020} which is limited by a method that only fits ellipsoids whose axes are parallel to the standard coordinate axes in Euclidean space. As noted there, many dimension reduction and clustering algorithms are highly complex and heavily parametrized, yielding results that are difficult to interpret or reproduce. In contrast CTEF both captures nonlinear structure and admits a clear interpretation. Indeed, letting $\lVert\cdot\rVert$ be Euclidean norm, every ellipsoid $\mathcal{E}$ in $\mathbb{R}^p$ satisfies
\begin{equation}\label{eq:ellipsoid}
\begin{aligned}
	\mathcal{E} &= \left\{x\in\mathbb{R}^p : \lVert A(a)R(x-c)\rVert^2 = 1\right\} \\
		&= \left\{ R^TA(a)^{-1}\eta + c : \eta\in S^{p-1}\right\}
\end{aligned}
\end{equation}
for some $p$-by-$p$ diagonal matrix $A(a)$ with diagonal entries $a_i>0$, a special orthogonal matrix\footnote{$SO(p)=\{R\in\mathbb{R}^{p\times p}:R^TR=RR^T=I, \det(R)=1\}$.} $R\in SO(p)$, and $c\in\mathbb{R}^p$. The reciprocals $1/a_i$ are lengths of the principal axes of $\mathcal{E}$, $R$ is a rotation that determines its orientation, and $c$ is its center. Equivalently, $\mathcal{E}$ is obtained from the unit sphere $S^{p-1}=\{\eta\in\mathbb{R}^p:\lVert \eta\rVert=1\}$ by \textit{scaling} $S^{p-1}$ by $A^{-1}$, \textit{rotating} by $R^T$, and \textit{translating} by $c$. CTEF reliably fits $A$, $R$, and $c$, giving a concrete relationship between noisy, nonlinear data and the simple object that is the sphere. The ideas in \Cref{sec:applications} thus stand to contribute to the growing call for interpretable algorithms in machine learning \cite{molnar2020,rudin2019}. In \Cref{sec:discussion} we elaborate this point and discuss future work.


\section{Cayley transform ellipsoid fitting}\label{sec:method}


In this section we introduce CTEF and prove it is (1) ellipsoid specific (\Cref{sec:loss}), (2) able to fit arbitrary ellipsoids (\Cref{rmk:cayley}), (3) invariant under rotations and translations of data (\Cref{sec:invariance}), and (4) convergent (\Cref{sec:convergence}). CTEF minimizes a loss (\Cref{sec:loss}) over a \textit{feasible set} (\Cref{sec:feasible}), yielding parameters that define our ellipsoid of best fit (\Cref{eq:objective}). The main idea in what follows is that the Cayley transform, defined below, turns an optimization problem with difficult nonlinear constraints to a problem with simple linear ones, opening the door to a range of optimization algorithms. We use the \textit{STIR} (\textit{Subspace Trust region Interior Reflective}) trust region method introduced in \cite{branch1999} and implemented by the Python package \cite{scipy2020}. STIR is specifically designed for bound-constrained nonlinear minimization problems in Euclidean space, making it well-suited to objective \eqref{eq:objective}. \Cref{alg:fit} outlines CTEF; definitions and details are in the subsections that follow.

\begin{algorithm}
\caption{Cayley transform ellipsoid fitting (CTEF)}\label{alg:fit}
\begin{algorithmic}[1]
\Require{Data $X\in\mathbb{R}^{n\times p}$}
\State $Y\gets V^T(X-\bar{x})$ \hfill\Comment{Transform data}
\State $\mathcal{F}\gets \mathcal{F}(Y)$ \hfill\Comment{Feasible set $\mathcal{F}$ based on $Y$}
\State $(a,c,s)\gets STIR(\mathcal{L},Y,\mathcal{F})$ \hfill\Comment{Minimize $\mathcal{L}$ over $\mathcal{F}$}
\Ensure{Diagonal matrix $A(a)$, rotation $R(s)$, center $c$}
\end{algorithmic}
\end{algorithm}


\subsection{The loss.}\label{sec:loss}


Ellipsoids in $\mathbb{R}^p$ are often expressed as
\begin{align}\label{eq:ellipsoid_M}
	\mathcal{E} &= \left\{x\in\mathbb{R}^p : (x-c)^TM(x-c)=1\right\}
\end{align}
where $M$ is a symmetric positive definite matrix and $c$ is the ellipsoid center. Eigendecomposing $M=R^TA^2R$ with $A$ diagonal and $R\in SO(p)$ shows \eqref{eq:ellipsoid_M} is equivalent to \eqref{eq:ellipsoid}. The positive definite constraint presents significant difficulty for fitting methods and is why most cannot fit arbitrary ellipsoids or are not ellipsoid specific \cite{kesaniemi2018, szpak2012}. To bypass this issue we use the \textit{Cayley transform} \cite{cayley1846} which maps skew-symmetric matrices $S$ to special orthogonal matrices via $\Cay(S)=(I+S)^{-1}(I-S)$. This is well-defined because $I+S$ is always invertible: If $x$ is in the nullspace of $I+S$ then $x=-Sx=S^Tx$, so $x^Tx=x^TS^Tx=-x^Tx$ and hence $x=0$. 

Consider fitting an ellipsoid to data $X=\{x^{(i)}\}_{i=1}^n\subseteq\mathbb{R}^p$. To ensure invariance under rotations and translations (\Cref{sec:invariance}) we transform $X$ to $\Phi_X(X)=\{\Phi_X(x) : x\in X\}$ where $\Phi_X(x)=V^T(x-\bar{x})$ with $\bar{x}=n^{-1}\sum x_i$ the mean of $X$ and $V\in O(p)$ an orthogonal matrix\footnote{$O(p)=\{Q\in\mathbb{R}^{p\times p}: Q^TQ=QQ^T=I\}$.} whose columns are eigenvectors\footnote{Also known as the \textit{principal components} of $X$.} of the covariance matrix of the centered data $\{x^{(i)}-\bar{x}\}$. Eigenvalues are always assumed to be ordered from largest to smallest with columns of $V$ ordered accordingly. Geometrically, $\Phi_X$ is an isometry mapping $X$ to a reference frame that is independent of its original orientation. We focus on fitting an ellipsoid to $\Phi_X(X)$; the best fit ellipsoid for $X$ is then readily obtained via $\Phi_X^{-1}$ (\Cref{sec:objective}).

Having introduced the Cayley transform, we now define our loss. Set $\mathbb{R}_+=(0,\infty)$ and $y^{(i)}=\Phi_X(x^{(i)})$. In light of \eqref{eq:ellipsoid} it is natural to consider minimizing $L:\mathbb{R}_+^p\times\mathbb{R}^p\times SO(p)\to \mathbb{R}_+$ defined by
\begin{align}\label{eq:constrained_loss}
	L(a,c,R) &= \sum_{i=1}^n \left(\lVert A(a)R(y^{(i)} - c)\rVert^2 - 1\right)^2,
\end{align}
but this has constraint $R\in SO(p)$. Instead, define $\mathcal{L}:\mathbb{R}_+^p\times\mathbb{R}^p\times\mathbb{R}^{p(p-1)/2}\to\mathbb{R}_+$ by
\begin{align}\label{eq:loss}
	\mathcal{L}(a,c,s) &= \sum_{i=1}^n \left(\lVert A(a)R(s)(y^{(i)} - c)\rVert^2 - 1\right)^2.
\end{align}
This is identical to \eqref{eq:constrained_loss} but now $R:\mathbb{R}^{p(p-1)/2}\to SO(p)$ is\footnote{Here $\circ$ denotes function composition, i.e. $f\circ g(x)=f(g(x))$.} $\Cay\circ S$ where $S$ is the identification of $\mathbb{R}^{p(p-1)/2}$ with the space $\mathfrak{so}(p)$ of skew-symmetric matrices, e.g.
\begin{align*}
	S(s_1,s_2,s_3) &=
		\begin{pmatrix}
			\phantom{-}0 & \phantom{-}s_1 & s_2 \\
			-s_1 & \phantom{-}0 & s_3 \\
			-s_2 & -s_3 & 0 
		\end{pmatrix}
\end{align*} 
when $p=3$. The Cayley transform has several well-established and desirable properties. First, it maps skew-symmetric matrices to special orthogonal matrices so any triple $(a,c,s)$ produces an ellipsoid of the form \eqref{eq:ellipsoid}, proving Claim (1). It is also a surjection onto
\begin{align*}
	SO(p)_* &= \left\{R\in SO(p) : -1\ \text{not an eigenvalue of } R\right\}
\end{align*}
so minimizing \eqref{eq:loss} can fit every ellipsoid except those in a set of measure zero. This is in stark contrast to existing ellipsoid specific methods that only fit ellipsoids with axis ratio\footnote{Throughout this paper, \textit{axis ratio} is the ratio between the largest and smallest axis lengths of an ellipsoid.} at most 2 or axes that are parallel to the standard coordinate axes \cite{kesaniemi2018,paul2020}. While the above proves an almost sure version of Claim (2), the full claim is justified by the following.

\begin{remark}\label{rmk:cayley}
It was shown in \cite{kahan2006} that every $R\in SO(p)$ satisfies $R=\Cay(S)D$ for some $S\in\mathfrak{so}(p)$ with entries $\lvert s_{ij}\rvert<1$ and diagonal matrix $D$ with diagonal entries $\pm 1$. See also \cite{helfrich2018}. Thus replacing $\Cay(S)$ with $\Cay(S)D$ in our minimization problem guarantees every ellipsoid can be obtained by such an objective, not just those in $SO(p)_*$. Moreover, the set $\mathbb{R}^{p(p-1)/2}$ in which the $s$ variables reside could be replaced by $[-1,1]^{p(p-1)/2}$. We do not pursue this direction however since numerical results suggest no need to include $D$ in practice.
\end{remark}
\vspace*{1em}
Proposition \ref{prop:gradient} gives the gradient of $\mathcal{L}$. This enables approximate differentiation in gradient-based algorithms like STIR to be replaced with an exact closed form expression, greatly improving computational performance.

\begin{prop}\label{prop:gradient}
Fix $x\in\mathbb{R}^p$ and define $\ell:\mathbb{R}^p_+\times\mathbb{R}^p\times\mathbb{R}^{p(p-1)/2}\to\mathbb{R}$ by $\ell(a,c,s)=\tfrac{1}{2}\lVert AR(x-c)\rVert^2$ with $A=\diag(a)$ and $R=\Cay(S(s))$ as above. Then
\begin{align*}
	\nabla_a\ell(a,c,s) &= a^T\diag\left(Ry\odot Ry\right) \\
	\nabla_c\ell(a,c,s) &= -y^TR^TA^2R \\
	\nabla_s\ell(a,c,s) &= B^T - B
\end{align*}
where $y=x-c$, $\odot$ is the Hadamard product, and $B=(I-S(s))^{-1}A^2Ryy^T(I+R^T)$.
\end{prop}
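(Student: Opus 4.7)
The plan is to differentiate $\ell(a,c,s) = \tfrac{1}{2}\lVert AR(x-c)\rVert^2 = \tfrac{1}{2}(x-c)^T R^T A^2 R(x-c)$ in the three blocks of variables separately, using $A^T A = A^2$ (which holds because $A$ is diagonal) and writing $y = x-c$ throughout.

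For $\nabla_a\ell$: since $A = \diag(a)$, expand $\ell = \tfrac{1}{2}\sum_i a_i^2 (Ry)_i^2$ and differentiate termwise in each $a_i$; collecting the partials into a vector gives the stated expression. For $\nabla_c\ell$: the map $c \mapsto y = x - c$ has Jacobian $-I$, and the standard derivative of the quadratic form $\tfrac{1}{2}y^T R^T A^2 R\, y$ with respect to $y$ is $y^T R^T A^2 R$ because $R^T A^2 R$ is symmetric; the chain-rule factor of $-1$ produces the claimed $-y^T R^T A^2 R$.

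For $\nabla_s\ell$: this is the substantive step, and the one that really uses the Cayley parametrization. Differentiating the identity $(I+S)R = I - S$ yields $(dS)R + (I+S)\,dR = -dS$, hence $dR = -(I+S)^{-1}(dS)(I+R)$. Substituting into $d\ell = y^T R^T A^2 (dR)\, y$ and cycling the trace,
\begin{align*}
	d\ell &= -\operatorname{tr}\bigl(M\, dS\bigr), \qquad M = (I+R)\, y y^T R^T A^2 (I+S)^{-1}.
\end{align*}
Because $S$ is skew, $(I+S)^T = I - S$, so $(I+S)^{-T} = (I-S)^{-1}$, and therefore $M^T = (I-S)^{-1} A^2 R\, yy^T (I + R^T) = B$. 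As $s$ varies over $\mathbb{R}^{p(p-1)/2}$, the perturbation $dS$ ranges over skew-symmetric matrices, so the coordinate $s_k$ corresponding to indices $(i,j)$ with $i<j$ produces $\partial\ell/\partial s_k = M_{ij} - M_{ji}$; packaged as a skew-symmetric matrix, $\nabla_s \ell = M - M^T = B^T - B$.

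The only nonroutine piece is passing from $d\ell = -\operatorname{tr}(M\, dS)$ to the gradient: because $dS$ is constrained to the skew-symmetric subspace, one cannot simply read off entries of $-M$ and must instead take its skew-symmetric part. Combined with the identity $(I+S)^{-T} = (I-S)^{-1}$, which converts the inverse appearing after differentiating $R$ into the one stated in the proposition, this immediately delivers the compact form $B^T - B$.
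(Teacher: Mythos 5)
Your proof is correct, and for the substantive block it takes a genuinely different route from the paper. The paper handles $\nabla_s\ell$ by invoking \cite[Theorem 3.2]{helfrich2018} as a black box: that theorem states $\nabla_s\ell = B^T - B$ with $B = (I-S)^{-1}\nabla_R\ell\,(I+R^T)$, so the paper only needs the one-line computation $\nabla_R\ell = A^2Ryy^T$. You instead re-derive that lemma from first principles: differentiating $(I+S)R = I-S$ to get $dR = -(I+S)^{-1}(dS)(I+R)$, cycling the trace, using $(I+S)^{-T} = (I-S)^{-1}$, and correctly projecting onto the skew-symmetric directions rather than reading off entries of $-M$ naively. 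This makes your argument self-contained where the paper's is not, at the cost of a few more lines; the two are otherwise structurally identical (same block decomposition, same ``standard rules'' treatment of $\nabla_a$ and $\nabla_c$). One small point worth flagging: your termwise expansion $\ell = \tfrac{1}{2}\sum_i a_i^2 (Ry)_i^2$ yields $\partial\ell/\partial a_i = a_i\,(Ry)_i^2$, which matches the stated formula $a^T\diag(y\odot y)$ only if $y$ there is read as $R(x-c)$ rather than $x-c$ as defined in the proposition; your computation is the correct one, so you should not claim it ``gives the stated expression'' without noting this discrepancy.
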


\begin{proof}
Note $\ell(a,c,s)=\tfrac{1}{2}y^TR^TA^2Ry$. The expressions for $\nabla_a\ell$ and $\nabla_c\ell$ follow immediately from standard differentiation rules. $\nabla_s\ell$ follows from \cite[Theorem 3.2]{helfrich2018} which states $\nabla_s\ell(s)=B^T-B$ with
\begin{align*}
	B &= (I-S(s))^{-1}\nabla_R\ell(I+R^T).
\end{align*}
Since $\nabla_R\ell = A^2Ryy^T$, the result holds.
\end{proof}

\Cref{ex:infinity} shows $\mathcal{L}$ is globally minimized as certain parameters go to infinity. This motivates our restriction to feasible sets described in \Cref{sec:feasible}.

\begin{example}\label{ex:infinity}
If all data lie exactly on $\mathcal{E}=\{x:\lVert A(a)R(s)(x-c)\rVert^2=1\}$, then clearly $a$, $c$, and $s$ are global minimizers of $\mathcal{L}$. But $\mathcal{L}$ is also globally minimized as $1/a$ and\footnote{For $a\in\mathbb{R}^p_+$ we define $1/a=(1/a_1,\dots,1/a_p)$.} $c$ go to infinity. To see why, consider the 2-dimensional example where $1/a=(m,m)$ and $c=(m,0)$ for $m\in\mathbb{N}$ and $s=0$ (so $R=I$). Direct computation shows that for any $x\in\mathbb{R}^2$,
\begin{align*}
	\lim_{m\to\infty}\lVert AR(x-c)\rVert^2 &= \lim_{m\to\infty}\frac{(m-x_1)^2 + x_2^2}{m^2}
		= 1.
\end{align*}
Hence $\lim_{m\to\infty}\mathcal{L}(a,c,s)=0$ whenever $n<\infty$. Intuitively, $\mathcal{L}$ is minimized by ellipsoids with infinite axis lengths and centers infinitely far from the data.
\end{example}


\subsection{The feasible set.}\label{sec:feasible}


To avoid infinite solutions (\Cref{ex:infinity}) and guarantee convergence (\Cref{sec:convergence}) we restrict the domain $\mathbb{R}^p_+\times\mathbb{R}^p\times\mathbb{R}^{p(p-1)/2}$ of $\mathcal{L}$ to a \textit{feasible set} $\mathcal{F}= [a^-,a^+]\times[c^-,c^+]\times [s^-,s^+]$ where $[a^-,a^+]$ is the rectangle $[a^-_1,a^+_1]\times\cdots\times [a^-_p, a^+_p]$ in $\mathbb{R}^p$ and $[c^-,c^+]$ and $[s^-,s^+]$ are defined similarly. Thus axis lengths $1/a_i$ satisfy $1/a_i^+\leq 1/a_i\leq 1/a_i^-$ and $c$ lies in $[c^-,c^+]$. Empirical studies indicate $s^-$ and $s^+$ are inconsequential provided $s^-_i\leq -1 < 1\leq s^+_i$, which is unsurprising given \Cref{rmk:cayley}. Our code uses default values $s^-_i=-5$ and $s^+_i=5$ for all $i$. The bound $a^+$ is also inconsequential provided it is sufficiently large; we set $a^+_i= 10^{300}$ for all $i$ so no axis has length less than $10^{-300}$. To choose $a^-$, $c^-$, and $c^+$, for $1\leq j\leq p$ define
\begin{align*}
	c^-_j &= \min_{1\leq i\leq n}y^{(i)}_j,\
	c^+_j = \max_{1\leq i\leq n}y^{(i)}_j,\
	m_j = c_j^+ - c_j^-.
\end{align*}
Our default for $a^-$ is $(w_{ax}\max_j m_j)^{-1}$ with $w_{ax}=10$, which is used for all experiments and examples in this paper except \Cref{fig:tau10,fig:noise10,fig:circadian_half}. In many settings CTEF is robust to $w_{ax}$ (\Cref{fig:sensitivity_waxis}). The default rectangle for $c$ is
\begin{align*}
	w[c^-,c^+] &= w[c_1^-,c_1^+] \times \cdots \times w[c_p^-,c_p^+]
\end{align*}
where $w>0$ determines the size of $\mathcal{F}$ (\Cref{fig:bounds}). While more significant than $w_{ax}$, $w$ is inconsequential when data are relatively uniform over the ellipsoid. However, $w$ can be significant when data are highly nonuniform or very noisy (\Cref{fig:cost} and \Cref{sec:sensitivity}). Furthermore, too small a $w$ can result in $w[c^-,c^+]$ not containing the true ellipsoid center, though this can always be avoided by increasing $w$. When unspecified, CTEF automatically computes $w$ with a linear regression model trained on $[c^-,c^+]$ from simulated data. While this works well in general (all experiments in \Cref{sec:results} choose $w$ this way), we recommend trying additional values for $w$ when possible, especially for highly nonuniform data.

\begin{figure*}
\makebox[\textwidth][c]{\includegraphics[width=.95\textwidth, height=.25\textheight]{./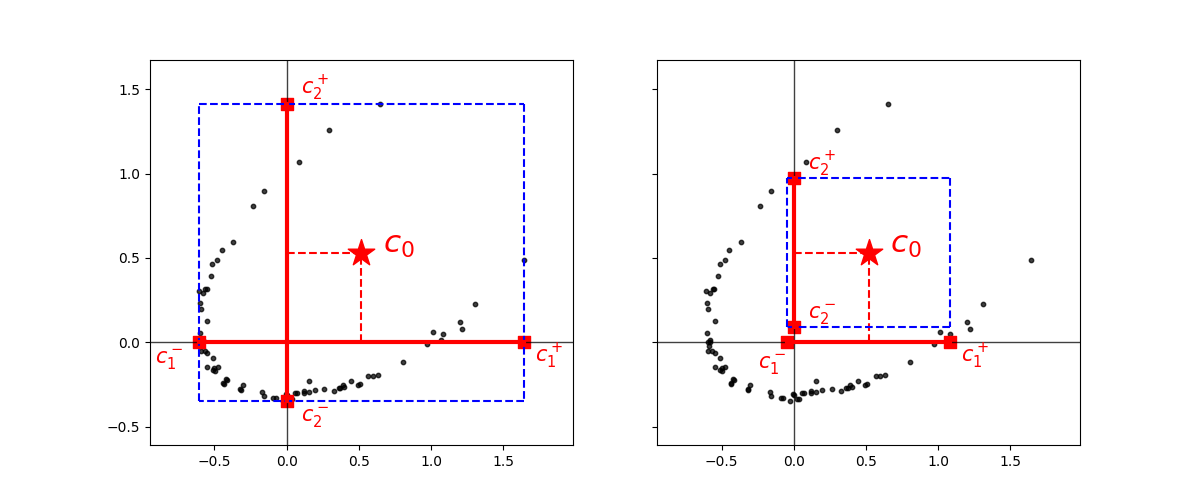}}%
\caption{\textit{{Feasible sets (blue) for the center when $w=1$ (left) and $w=0.5$ (right). Samples $X=\{x^{(i)}\}$ are drawn from the Ellipsoid-Gaussian distribution described in \Cref{sec:experiments} with $\tau=2$ and $1\%$ noise, then transformed by $\Phi_X$. The star $c_0$ marks the midpoint of $w[c^-,c^+]$ and is the default initial value in our algorithm. The initial value for $a$ is the vector of all ones. Note $c_0$ does not depend on $w$.}}}
\label{fig:bounds}
\end{figure*}


\subsection{The objective.}\label{sec:objective}


With $\mathcal{L}$ and $\mathcal{F}$ defined, our objective is to find
\begin{align}\label{eq:objective}
	(a_*,c_*,s_*) &= \argmin_{(a,c,s)\in\mathcal{F}}\mathcal{L}(a,c,s).
\end{align}
The resulting ellipsoid $\mathcal{E}=\{y\in\mathbb{R}^p:\lVert A(a_*)R(s_*)(y-c_*)\rVert=1\}$ is the best fit for the transformed data $Y=\Phi_X(X)$. Transforming back to $X=\{Vy+\bar{x}:y\in Y\}$ and making the substitution $x=Vy+\bar{x}$ gives
\begin{align*}
	\mathcal{E}_X &= V\mathcal{E} + \bar{x}
		= \{Vy+\bar{x} : y\in\mathcal{E}\} \\
		&= \left\{Vy+\bar{x} : \lVert A(a_*)R(s_*)(y-c_*)\rVert = 1\right\} \\
		&= \left\{x : \lVert A(a_*)R(s_*)V^T\left(x-(Vc_*+\bar{x})\right)\rVert^2=1\right\} \\
		&= \left\{x : \lVert A(a_*)\widetilde{R}(s_*)(x-\widetilde{c}_*)\rVert=1\right\}
\end{align*}
where $\widetilde{c}_*=Vc_*+\bar{x}$ and $\widetilde{R}(s_*)=R(s_*)V^T$. The second equality in \eqref{eq:ellipsoid} yields the equivalent form
\begin{align}\label{eq:fit}
	\mathcal{E}_X &= \{V\widetilde{R}(s_*)^TA(a_*)^{-1}\eta + \widetilde{c}:\eta\in S^{p-1}\}.
\end{align}
$\mathcal{E}_X$ is the \textit{ellipsoid of best fit for $X$}. This is justified since
\begin{align*}
	A(a)R(s)(y^{(i)}-c) &= A(a)R(s)V^T\left(x^{(i)}-(Vc+\bar{x})\right) \\
		&= A(a)\widetilde{R}(s)(x^{(i)}-\widetilde{c}),
\end{align*}
 and hence $L(a,c,R(s))=L(a,\widetilde{c},\widetilde{R}(s))$ for all $a$, $c$, and $s$, where $L$ is the loss \eqref{eq:constrained_loss}. In particular, $(a_*,c_*,s_*)$ satisfies \eqref{eq:objective} if and only if $(a_*, c_*,R(s_*))$ minimizes $L$ over $\mathcal{F}$ if and only if $(a_*, \widetilde{c}_*,\widetilde{R}(s_*))$ minimizes $L$ over the feasible set $\widetilde{\mathcal{F}}=\{(a, \widetilde{c}, s) : (a,c,s)\in\mathcal{F}\}$, which is just a rotation and translation of the $c$ coordinates in $\mathcal{F}$.


\subsection{Invariance.}\label{sec:invariance}


It is desirable for ellipsoid fitting algorithms to be invariant under rotations and translations \cite{bookstein1979, fitzgibbon1999, kesaniemi2018}. Formally, an algorithm is \textit{invariant} if $\mathcal{E}_Z=Q\mathcal{E}_X+\tau$ for any $X=\{x^{(i)}\}\subseteq\mathbb{R}^p$, $Q\in O(p)$, and $\tau\in\mathbb{R}^p$, where $\mathcal{E}_X$ and $\mathcal{E}_Z$ are the ellipsoids of best fit for $X$ and the transformed data $Z=\{z^{(i)}=Qx^{(i)}+\tau\}$, respectively.

\begin{prop}\label{prop:invariant}
CTEF is invariant. 
\end{prop}

\begin{proof}
Let $X$ and $Z$ be as above. Direct computation shows the mean of $Z$ is $\bar{z}=Q\bar{x}+\tau$ and the columns of $W=QV$ are eigenvectors of the covariance matrix of $\{z^{(i)}-\bar{z}\}$. Therefore
\begin{equation}\label{eq:invariant}
\begin{aligned}
	\Phi_Z(z^{(i)}) &= W^T(z^{(i)}-\bar{z}) \\
		&= V^TQ^T(Qx^{(i)}+\tau-Q\bar{x}-\tau) \\
		&= V^T(x^{(i)}-\bar{x})		
		= \Phi_X(x^{(i)}).
\end{aligned}
\end{equation}
Hence $\Phi_X(X)=\Phi_Z(Z)$ and the ellipsoid $\mathcal{E}$ obtained by solving objective \eqref{eq:objective} is identical for $X$ and $Z$. Thus $\mathcal{E}_Z = W\mathcal{E} +\bar{z} = QV\mathcal{E} + Q\bar{x} + \tau = Q\mathcal{E}_X+\tau$.
\end{proof}

\begin{remark}
The feasible set $\mathcal{F}$ depends only on $\Phi_X(X)$. So since $\Phi_X(X)=\Phi_Z(Z)$ for any rotation and translation $Z$ of $X$, invariance of CTEF is unaffected by the choice of $\mathcal{F}$. Observe also that $X$ in the preceding proof is an arbitrary finite subset of $\mathbb{R}^p$, so CTEF is invariant regardless of model assumptions, outliers, or noise. Finally, in \Cref{sec:applications} we use CTEF for dimension reduction by replacing $V$ with the $p$-by-$k$ matrix $V_k$ consisting of $k< p$ columns of $V$ and solving \eqref{eq:objective} to obtain a best fit ellipsoid $\mathcal{E}_k$ for $\Phi_X^k(X)\subseteq\mathbb{R}^k$ where $\Phi_X^k(x)=V_k^T(x-\bar{x})$. The $k$-dimensional ellipsoid of best fit for $X$ is then $\mathcal{E}^k_X=V_k\mathcal{E}_k+\bar{x}$. CTEF remains invariant in this case since, letting $Z=QX+\tau$ as before, we have $W_k=QV_k$ and replacing $W$ and $V$ with $W_k$ and $V_k$ in \eqref{eq:invariant} shows $\Phi^k_X(X)=\Phi^k_Z(Z)$. Thus, as with $\mathcal{E}$ in the preceding proof, $\mathcal{E}_k$ is identical for $X$ and $Z$ and 
\begin{align*}
	\mathcal{E}^k_Z &= W_k\mathcal{E}_k +\bar{z}
		= QV_k\mathcal{E}_k + Q\bar{x} + \tau
		= Q\mathcal{E}^k_X+\tau.
\end{align*}
\end{remark}


\subsection{Convergence.}\label{sec:convergence}


As discussed at the beginning of this section, we use STIR to solve \eqref{eq:objective}. Boundedness of $\mathcal{F}$ guarantees convergence of STIR to a local minimum of the loss function $\mathcal{L}$. This is because the Cayley transform, and hence $\mathcal{L}$, is twice continuously differentiable \cite{hairer2022}. Since $\mathcal{F}$ is compact it must therefore contain a minimum of $\mathcal{L}$. Furthermore, for any $\theta_0=(a_0,b_0,s_0)\in\mathcal{F}$ the level set
\begin{align*}
	\mathscr{L}(\theta_0) &= \left\{\theta = (a,b,s)\in\mathcal{F} : \mathcal{L}(\theta)\leq\mathcal{L}(\theta_0)\right\}
\end{align*}
is compact as it is closed\footnote{If $\theta_m$ is a sequence in $\mathscr{L}(\theta_0)$ converging to $\theta_*$, then $\mathcal{L}(\theta_*)=\lim_{m\to\infty}\mathcal{L}(\theta_m)\leq\mathcal{L}(\theta_0)$ and so $\theta_*\in\mathscr{L}(\theta_0)$.} by continuity of $\mathcal{L}$ and bounded by construction. Convergence of STIR to a local minimum then follows from \cite[Theorem 3]{branch1999}. See also \cite{coleman1996}.

\begin{remark}
A potential concern in light of \Cref{ex:infinity} is that the local minimum to which CTEF converges will often lie on the boundary of the feasible set $\mathcal{F}$, potentially resulting in a poor fit. Thankfully, STIR greatly diminishes this possibility by implementing a reflection technique that encourages exploration of the interior of $\mathcal{F}$ \cite{branch1999,coleman1996}. While boundary solutions are still possible, our experience shows they seldom occur in practice, especially for data distributed over substantial portions of the ellipsoid. Moreover, when boundary solutions do occur the fits still tend to be good.
\end{remark}


\section{Experiments}\label{sec:experiments}


We compare CTEF to $6$ methods: sum-of-discriminants (SOD) \cite{kesaniemi2018}, fixed constant (FC) \cite{rosin1993}, Bookstein (BOOK) \cite{bookstein1979}, hyperellipsoid-fit (HES) \cite{kesaniemi2018}, Taubin (TAUB) \cite{taubin1991}, and Calafiore plus alternating direction method of multipliers (CADMM) \cite{lin2016}. Only CTEF, HES, and CADMM are ellipsoid specific when $p>2$. HES cannot fit ellipsoids with axis ratio greater than 2 when $p>2$ and FC is not invariant under translations \cite{kesaniemi2018}. CADMM implements the algorithm introduced in \cite{calafiore2002} but uses ADMM to reduce its complexity. SOD, FC, BOOK, HES, and TAUB are implemented with the MATLAB package \cite{kesaniemi2023}. In all that follows SOD, FC, BOOK, and HES are regularized via the method introduced in \cite{kesaniemi2018} with regularization parameter selected according to \cite{gander1994}. Following \cite{kesaniemi2023} regularization is not used for TAUB.


\subsection{Data simulation.}\label{sec:simulation}


Data are simulated from the \textit{Ellipsoid-Gaussian model}
\begin{align}\label{eq:ellipsoid_gaussian}
	x^{(i)} &= \Lambda\eta_i + c + \epsilon_i
\end{align}
where $\Lambda=R^TA^{-1}$ with $c$, $A$, and $R$ as in \eqref{eq:ellipsoid}, $\eta_i\sim vMF(\mu,\tau)$ are independent random variables drawn from the von Mises-Fisher distribution on $S^{p-1}$, and $\epsilon_i\sim\mathcal{N}_p(0,\sigma^2 I)$ are independent normal random variables representing noise. Model \eqref{eq:ellipsoid_gaussian} is introduced and studied in detail in \cite{song2022}. The parameters of $vMF(\mu,\tau)$ are the mean $\mu\in S^{p-1}$ and measure of spread $\tau\geq 0$. Increasing $\tau$ increases the concentration of the distribution about $\mu$, with $\tau=\infty$ corresponding to a delta measure at $\mu$ and $\tau=0$ to the uniform distribution on $S^{p-1}$. As before, $c$ is the center of the ellipsoid and $\Lambda$ determines its shape and orientation. Entries of $c$ are drawn from $\mathcal{N}(0,10)$ in all experiments. $\Lambda$ is chosen to have determinant 1 and so that the resulting ellipsoid has a specified axis ratio. Other axis lengths are chosen uniformly at random between the smallest and largest. The standard deviation $\sigma$ is specified as a percentage of the diameter of the longest axis of the ellipsoid. Essentially all choices are made to agree with \cite{kesaniemi2018}.


\subsection{Measures of performance.}\label{sec:performance}


Several measures of ellipsoid fitting performance exist in the literature. The two we focus on are \textit{offset error}, $e_o$, and \textit{shape error}, $e_s$ \cite{kesaniemi2018}. Offset error is the norm $\lVert c - c_*\rVert$ of the difference between the estimated center, $c$, and the true center $c_*$ of the ellipsoid from which data are simulated. Shape error is
\begin{align*}
	e_s &= \frac{\sigma_1(\Lambda^{-1}\Lambda_*)}{\sigma_p(\Lambda^{-1}\Lambda_*)} - 1
\end{align*}
where $\Lambda$ and $\Lambda_*$ are the estimated and true shape matrices and $\sigma_1$ and $\sigma_p$ are the largest and smallest singular values of $\Lambda^{-1}\Lambda_*$. Another class of measures is
\begin{align}\label{eq:general_loss}
	\ell_{p,q}(\Lambda,c) &= \sum_{i=1}^n \left\lvert\lVert \Lambda^{-1}(x^{(i)}-c)\rVert^p-1\right\rvert^q
\end{align}
for integers $p$ and $q$ \cite{calafiore2002,lin2016}. Observe $\ell_{2,2}$ is precisely \eqref{eq:loss}. Unlike offset and shape errors, $\ell_{p,q}$ does not require knowledge of $\Lambda_*$ and $c_*$. However, $\ell_{p,q}$ error can be misleading since it goes to $0$ as axis lengths and centers go to infinity. Thus $\ell_{p,q}$ can be arbitrarily small for ellipsoid fits that are arbitrarily bad (\Cref{fig:cost}). For this reason we only consider offset and shape error in our experiments.

\begin{figure*}
\makebox[\textwidth][c]{\includegraphics[width=.65\textwidth, height=.2\textheight]{./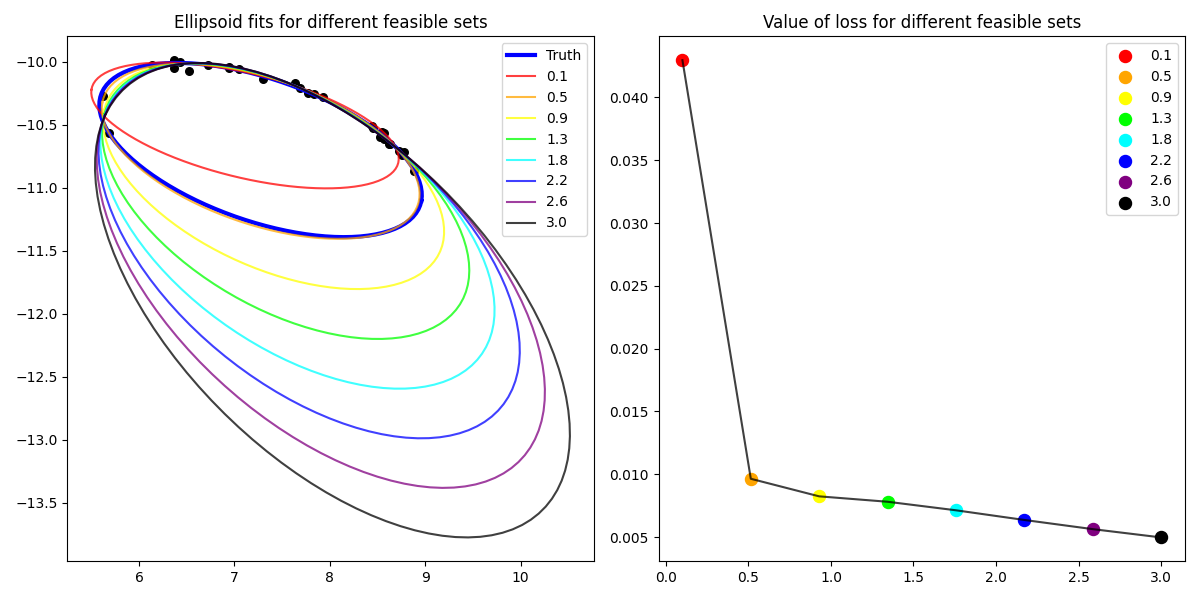}}%
\caption{\textit{(Left) Best fit ellipsoids for different feasible set weights $w$ defined in \Cref{sec:feasible}. Each color represents a different $w$. The best fit ellipsoid when $w=0.5$ (orange) closely resembles the true ellipsoid (thick blue curve). (Right) Loss corresponding to each $w$. For example, the loss when $w=0.5$ is approximately $0.01$. While loss decreases monotonically to $0$, centers and axis lengths of the fitted ellipsoids diverge. Here $n=30$ data points (black dots, left panel) are simulated from the Ellipsoid-Gaussian model with $\tau=2$ and axis ratio $3$.}}
\label{fig:cost}
\end{figure*}

\begin{figure*}
\centering
\includegraphics[width=\textwidth, height=.3\textheight]{./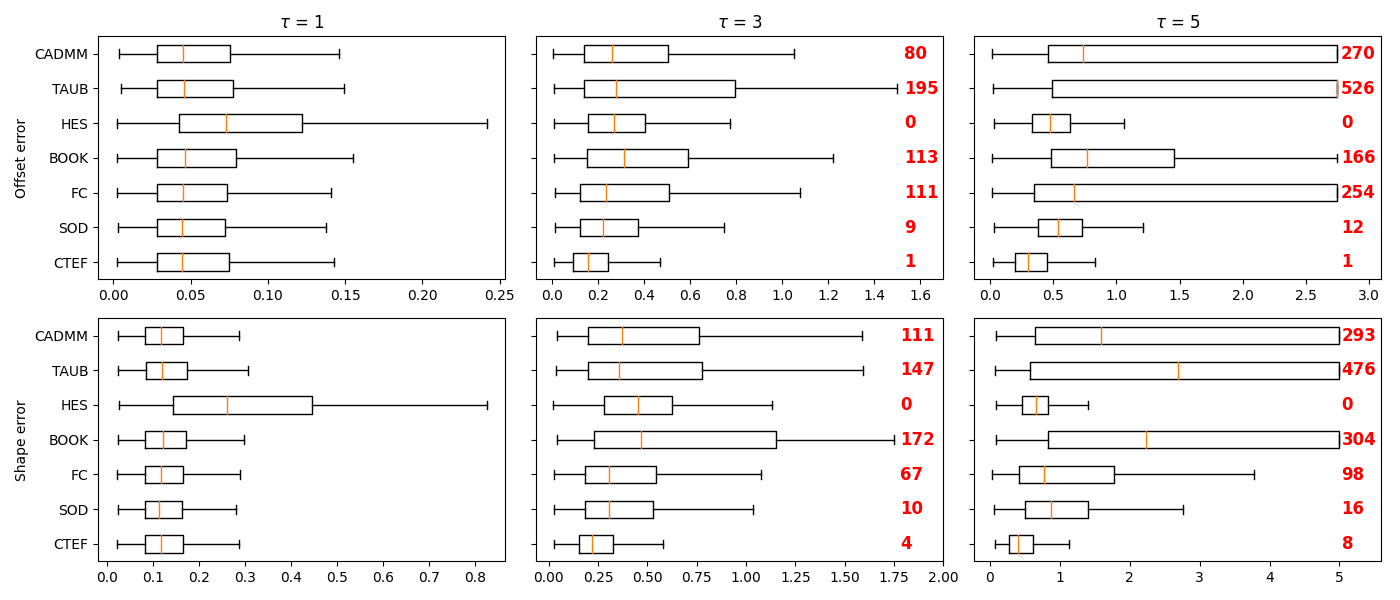}
\caption{\textit{Errors for different $\tau$ with $p=3$, $r_{ax}=2.5$, noise $=1\%$, and $n=18$. Only CTEF is stable for all values of $\tau$.}}
\label{fig:tau3}
\end{figure*}

\begin{figure*}
\centering
\includegraphics[width=\textwidth, height=.3\textheight]{./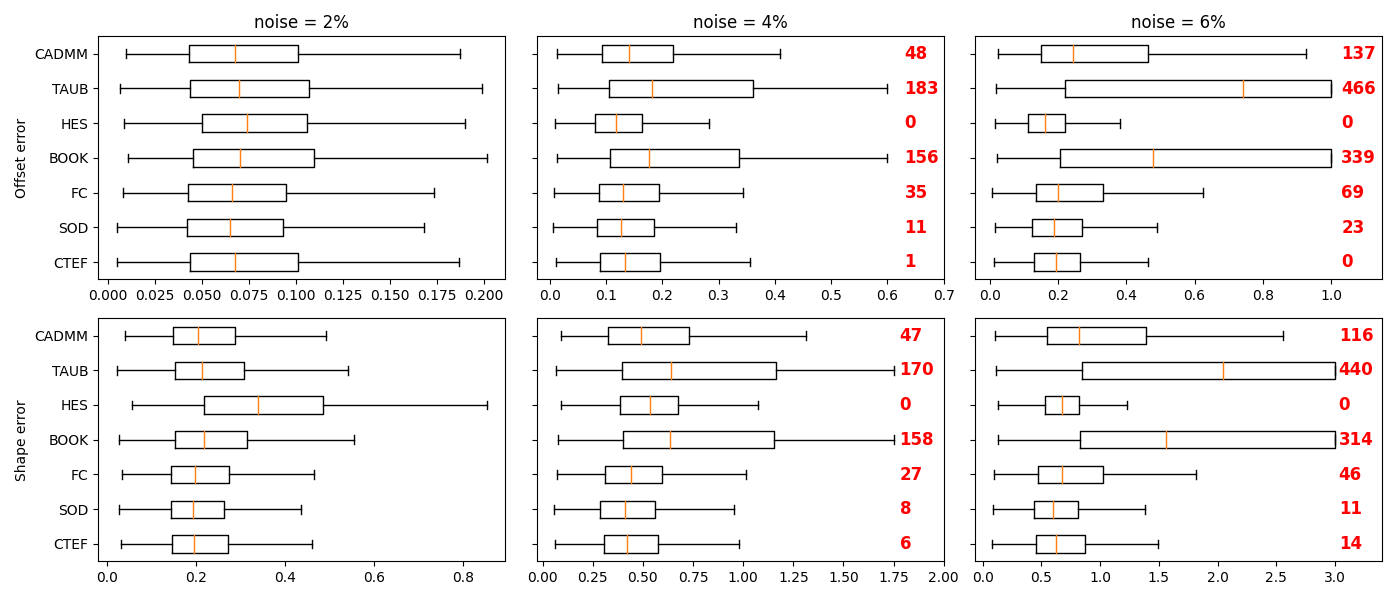}
\caption{\textit{Errors for different noise values with $p=3$, $\tau=0$, $r_{ax}=2.5$, and $n=18$. Only CTEF, SOD, and HES are stable as noise increases.}}
\label{fig:noise3}
\end{figure*}

\begin{figure*}
\centering
\includegraphics[width=\textwidth, height=.3\textheight]{./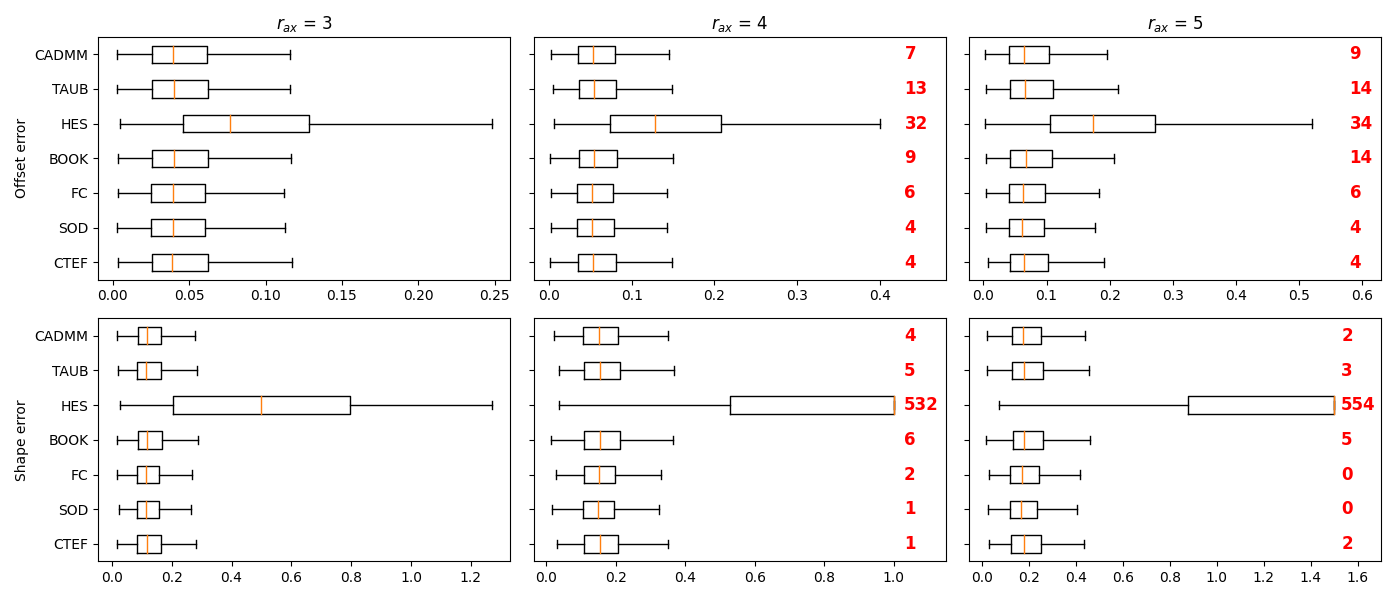}
\caption{\textit{Errors for different axis ratios with $p=3$, $\tau=0$, noise $=1\%$, and $n=18$. All methods except HES perform similarly.}}
\label{fig:ratio3}
\end{figure*}

\begin{figure*}
\centering
\includegraphics[width=\textwidth, height=.3\textheight]{./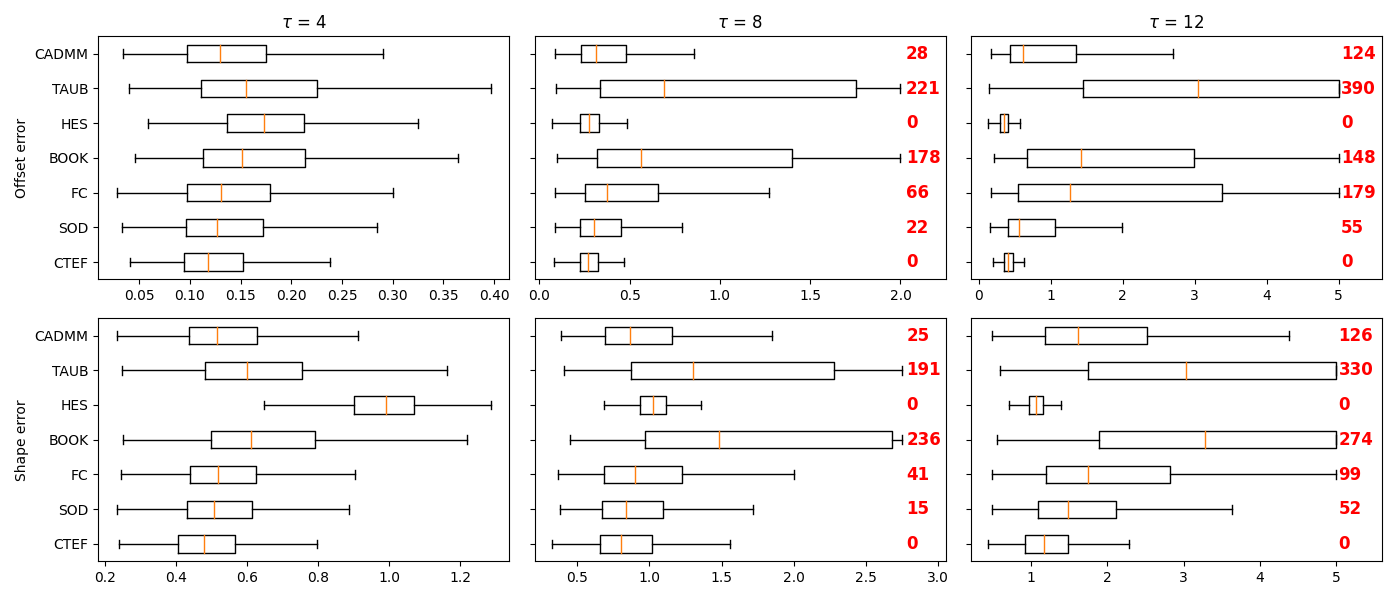}
\caption{\textit{Errors for different $\tau$ with $p=10$, $r_{ax}=2.5$, noise $=1\%$, and $n=100$. The $\tau$ values in this experiment are larger than those in the $p=3$ case (\Cref{fig:tau3}) to account for dependence between $\tau$ and dimension \cite{dhillon2003}.}}
\label{fig:tau10}
\end{figure*}

\begin{figure*}
\centering
\includegraphics[width=\textwidth, height=.27\textheight]{./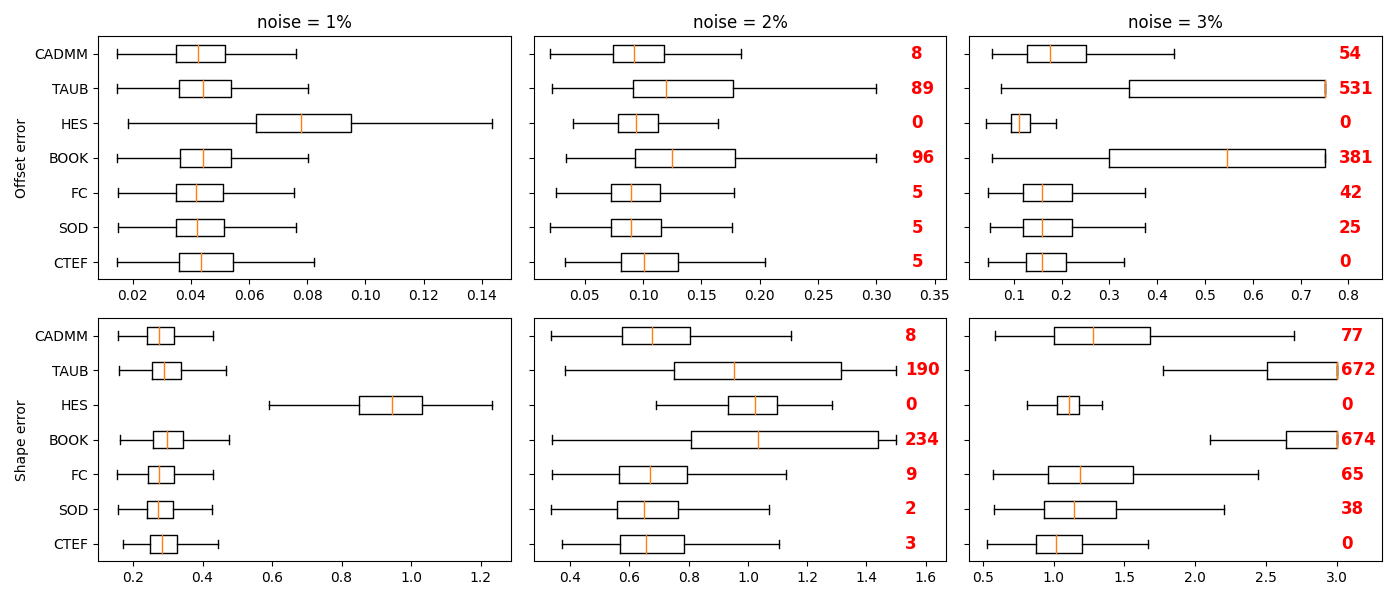}
\caption{\textit{Errors for different noise values with $p=10$, $\tau=0$, $r_{ax}=2.5$, and $n=130$.}}
\label{fig:noise10}
\end{figure*}


\subsection{Results.}\label{sec:results}


We consider the concentration parameter $\tau$, noise parameter $\sigma$, and axis ratio $r_{ax}$ which, recall, is the ratio between the longest and shortest axis lengths of an ellipsoid. In each experiment we vary the parameter of interest keeping others fixed. One experiment consists of $1{,}000$ trials. One trial consists of fitting an ellipsoid to data generated from the Ellipsoid-Gaussian model with the specified parameters using each fitting method. Box plots in \Cref{fig:tau3,fig:noise3,fig:ratio3,fig:tau10,fig:noise10} show distributions of the offset and shape errors for each fitting method in each experiment. Medians are indicated by the orange lines and whiskers extend to $1.5$ times the interquartile range. Red numbers in some plots indicate the number of times the error for a certain fitting method exceeds a specified threshold, meaning the method either had relatively large error or failed to return an ellipsoid at all. For example, in the upper right panel in \Cref{fig:tau3} the offset error for CADMM exceeds the cutoff value in $270$ of the $1{,}000$ trials.

\Cref{fig:tau3,fig:noise3,,fig:ratio3} correspond to $p=3$ and \Cref{fig:tau10,fig:noise10} to $p=10$. Except for HES, all methods perform similarly when data are uniformly distributed ($\tau=0$) with low noise. The most pronounced differences occur when data are not uniformly distributed (\Cref{fig:tau3,fig:tau10}). Only CTEF is stable throughout, a finding further validated by the Rosenbrock and circadian rhythm examples in \Cref{sec:rosenbrock} (\Cref{fig:rosenbrock_loss_compare}). 

The relative robustness of CTEF is due in part to the feasible set which uses the data to confine to ``reasonable" ellipsoid parameters and, in particular, prevents solutions from escaping to infinity. In all experiments $w_{ax}=10$ when $p=3$, $w_{ax}=1$ when $p=10$, and $w$ is automatically computed from $[c^-,c^+]$ via a linear regression model trained on data simulated from \eqref{eq:ellipsoid_gaussian}. Thus even with minimal user-involved parameter tuning CTEF outperforms other methods in the common setting of noisy, nonuniform data, making it the superior method in many practical settings.


\subsection{Parameter sensitivity}\label{sec:sensitivity}


\Cref{fig:sensitivity,fig:sensitivity_waxis} show sensitivity of CTEF to $w$ and $w_{ax}$ for different values of the variables $n$, $\tau$, $r_{ax}$, $\sigma$, and $p$. Specifically, for each variable we run $2{,}000$ trials with all other variables fixed. One trial consists of drawing the variable of interest uniformly at random from a specified range, simulating data from the Ellipsoid-Gaussian model with this and the fixed variables, fitting the data with CTEF for each parameter in a specified range, and computing the offset and shape errors for each fit. For example, in the leftmost plots of \Cref{fig:sensitivity}, each trial consists of drawing $n$ uniformly at random from $\{15,\dots,75\}$ with $\tau$, $r_{ax}$, $\sigma$, and $p$ fixed at $0$, $2$, $0.01$, and $3$, respectively. For each of $15$ evenly-spaced weights $w$ between $0.1$ and $3$ we apply CTEF with that weight and compute the resulting errors. The two plots show the average offset and shape errors as well as $\pm 1$ standard deviation over the $2{,}000$ trials. Our results indicate $w_{ax}$ has virtually no effect across all settings, and $w$ is significant predominantly when $\tau>2$ and, to a lesser extent, in the presence of considerable noise.

\begin{figure*}
\centering
\includegraphics[width=.75\textwidth, height=.2\textheight]{./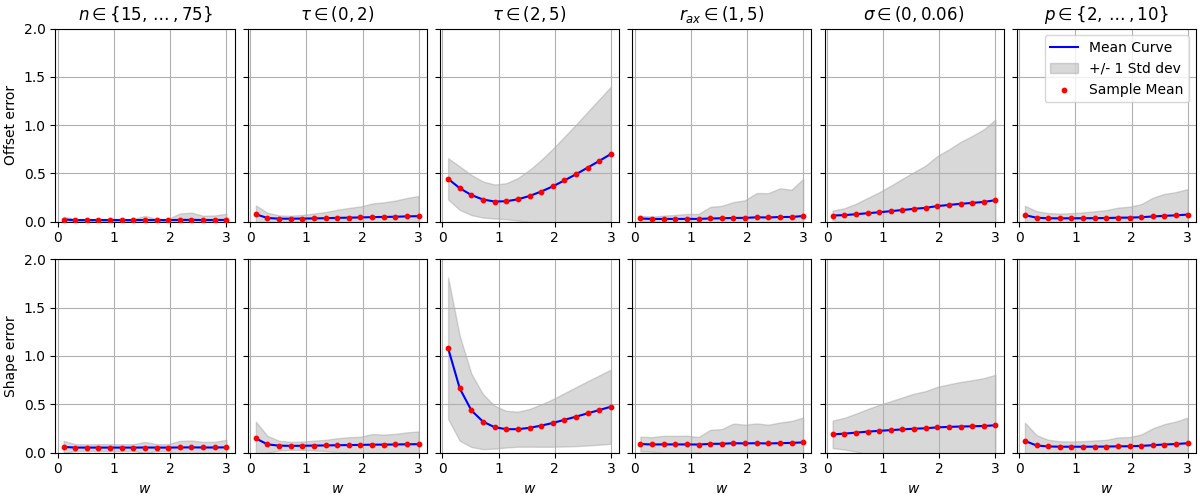}
\caption{\textit{Sensitivity of CTEF to $w$. The weight $w$ ranges over $15$ evenly spaced points between $0.1$ and $3$ and $w_{ax}=10$. Variables $n$, $\tau$, $r_{ax}$, $\sigma$, and $p$ are fixed at $30$, $0$, $2$, $0.01$, an $3$, respectively, when not varied. The only exception is $n=4p+p(p-1)$ when $p$ varies (rightmost plots).}}
\label{fig:sensitivity}
\end{figure*}

\begin{figure*}
\centering
\includegraphics[width=.75\textwidth, height=.2\textheight]{./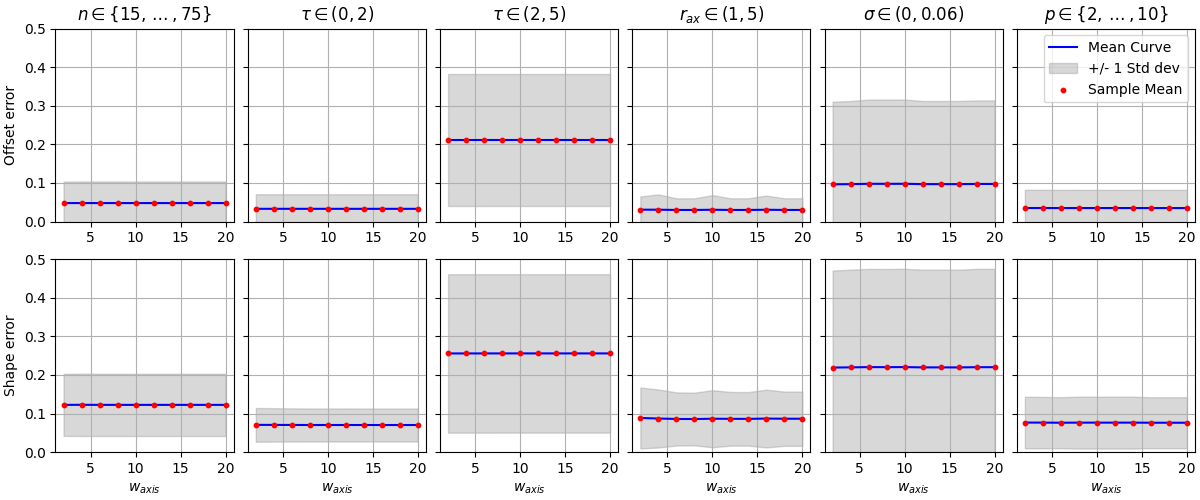}
\caption{\textit{Sensitivity of CTEF to $w_{ax}$. Here $w=1$ and $w_{ax}$ ranges over $\{2,4,6,\dots,20\}$. All other conditions are the same as in \Cref{fig:sensitivity}.}}
\label{fig:sensitivity_waxis}
\end{figure*}


\subsection{Runtime}\label{sec:runtime}


\Cref{tab:dimension,tab:n} report average runtime in milliseconds over $1{,}000$ trials in different dimensions and for various sample sizes. When $p$ varies, sample size is set to $4p+p(p-1)$ and in \Cref{tab:n} $p=3$. In these experiments all algorithms are run using MATLAB version 23.3.0. FC is the fastest method -- including among those not shown -- and CTEF is the slowest. Nevertheless, CTEF runs in approximately $1/10$th of $1$ second when $p=3$ and $n=1{,}000$ as well as in $15$ dimensions, a practical runtime in many settings including all those in \Cref{sec:applications}.

\begin{table}
\resizebox{.5\textwidth}{!}{%
\begin{tabular}{lccccc}
\hline
 Method   &   p = 2 &   p = 3 &   p = 5 &   p = 10 &   p = 15 \\
\hline
 CTEF     &   17.18 &   22.29 &   15.27 &    37.36 &    82.45 \\
 SOD      &    0.18 &    0.23 &    0.3  &     2.28 &    10.26 \\
 FC       &    0.08 &    0.12 &    0.15 &     0.57 &     1.37 \\
 CADMM    &    0.12 &    0.16 &    0.2  &     1.22 &     3.31 \\
\\
\end{tabular}
}
\caption{Runtime (in milliseconds) vs dimension}
\label{tab:dimension}
\end{table}

\begin{table}
\resizebox{.5\textwidth}{!}{%
\begin{tabular}{lcccc}
\hline
 Method   &   n = 100 &   n = 250 &   n = 500 &   n = 1000 \\
\hline
 CTEF     &   27.74 &     46.66 &     73.6  &     120.71 \\
 SOD      &      0.24 &      0.28 &      0.37 &       0.34 \\
 FC       &      0.11 &      0.13 &      0.14 &       0.16 \\
 CADMM    &     0.22 &      0.38 &      0.64 &       1.1  \\
 \\
\end{tabular}
}
\caption{Runtime (in milliseconds) vs sample size}
\label{tab:n}
\end{table}


\section{Applications}\label{sec:applications}


So far we fit $p$-dimensional ellipsoids to $p$-dimensional data $X=\{x^{(i)}\}$. For $k<p$ we can instead fit a $k$-dimensional ellipsoid to $X$ by replacing the $p$-by-$p$ matrix $V$ with a $p$-by-$k$ matrix $V_k$ whose $k$ columns are distinct columns of $V$. This is equivalent to applying principal component analysis (PCA) to $X$ and fitting an ellipsoid to the projection of $X$ onto $k$ principal components. The corresponding loss $\mathcal{L}_k$ is identical to \eqref{eq:loss} except now $y^{(i)}=V_k^T(x^{(i)}-\bar{x})\in\mathbb{R}^k$ and the feasible set -- which was a $(3+p)p/2$ dimensional rectangle -- is $(3+k)k/2$ dimensional. Given a solution ($a^k_*, c^k_*, s^k_*)$ of \eqref{eq:objective} with $\mathcal{L}_k$ in place of $\mathcal{L}$, the $k$-dimensional ellipsoid of best fit for $X$ is the embedded (in $\mathbb{R}^p$) $k$-dimensional ellipsoid
\begin{align*}
	\mathcal{E}^k_X &= \{V_k\widetilde{R}(s^k_*)^TA(a^k_*)^{-1}\eta + \widetilde{c}^k_*:\eta\in S^{k-1}\},
\end{align*}
where $\widetilde{R}(s^k_*)=R(s^k_*)V_k^T$ and $\widetilde{c}^k_*=V_kc^k_*+\bar{x}$. As in PCA, $V_k$ often consists of the first $k$ columns of $V$ which, recall, correspond to the $k$ largest eigenvalues of the covariance matrix of the centered data. However, the Rosenbrock example (\Cref{sec:rosenbrock}) shows the first $k$ principal components are not always optimal for ellipsoid fitting. In \Cref{sec:cell} we use this PCA-based method to visualize human cell cycle data, illustrating its potential usefulness when data have nonlinear features. 

Ellipsoid fitting can also be used to analyze, visualize, and reduce dimensionality of periodic data. Unlike the PCA-based method above, this approach does not leverage existing dimension reduction algorithms. To illustrate the idea, consider noisy observations from different periodic functions at discrete time steps. Our goal is to determine how synchronized, i.e. in phase, these functions are. For concreteness, suppose we have three sets of data $X_\omega=\{x^{(i)}_\omega = \cos(t_i + \omega)+\epsilon : t_i\in T\}$ with $\omega\in\{0,\pi/4,\pi\}$, $T=\{0,\dots,4\pi\}$, and $\epsilon\sim\mathcal{N}(0,0.1)$ (\Cref{fig:periodic}). To configure the data for ellipsoid fitting, set $\max_\omega=\max\{x^{(i)}_\omega\}$, $\min_\omega=\min\{x^{(i)}_\omega\}$, and define
\begin{align}\label{eq:scale}
	y^{(i)}_\omega &= \frac{x^{(i)}_\omega - \min_\omega}{\max_\omega} + 1 
\end{align}
so $\max\{y^{(i)}_\omega\}=2$ and $\min\{y^{(i)}_\omega\}=1$. Then map $(t_i,y^{(i)}_\omega)\mapsto (y^{(i)}_\omega\cos(t_i/2), y^{(i)}_\omega\sin(t_i/2))$, which is a polar transformation but with observations taken over twice the period of cosine, namely $4\pi$, hence the factor $1/2$. This is because we want the maximum of the data to occur twice in one cycle, with the two copies of the maximum ideally representing two ends of the major axis of an ellipse. Similarly, the two copies of the minimum ideally represent two ends of the minor axis. The first three columns in \Cref{fig:periodic} show the scaled (top row) and polar-transformed (bottom row) data for each value of $\omega$. The fourth and fifth columns show $X_0\cup X_{\pi/4}$ and $X_0\cup X_\pi$, respectively. The key observation is that synchronization is captured by the axis ratio of the ellipsoid of best fit for the combined data sets, with larger axis ratios corresponding to greater synchronization. In \Cref{fig:periodic} the axis ratios of the best fit ellipsoids for $X_0\cup X_{\pi/4}$ and $X_0\cup X_\pi$ are $1.751$ and $1.042$.

\begin{remark}
In the above example we assumed data were collected over twice the period of cosine. If one were to instead have observations from less than two periods, e.g. over $[0,2\pi]$, then the scaling \eqref{eq:scale} maps data to a proper subset, e.g. half, of an ellipsoid. Such situations are quite common, especially in biological applications where longitudinal data are often difficult and expensive to collect. This highlights the benefit of CTEF's superior ability to fit nonuniform data relative to other ellipsoid fitting algorithms; see in particular \Cref{fig:circadian_half}.
\end{remark}

\begin{figure*}
\centering
\includegraphics[width=.9\textwidth, height=.25\textheight]{./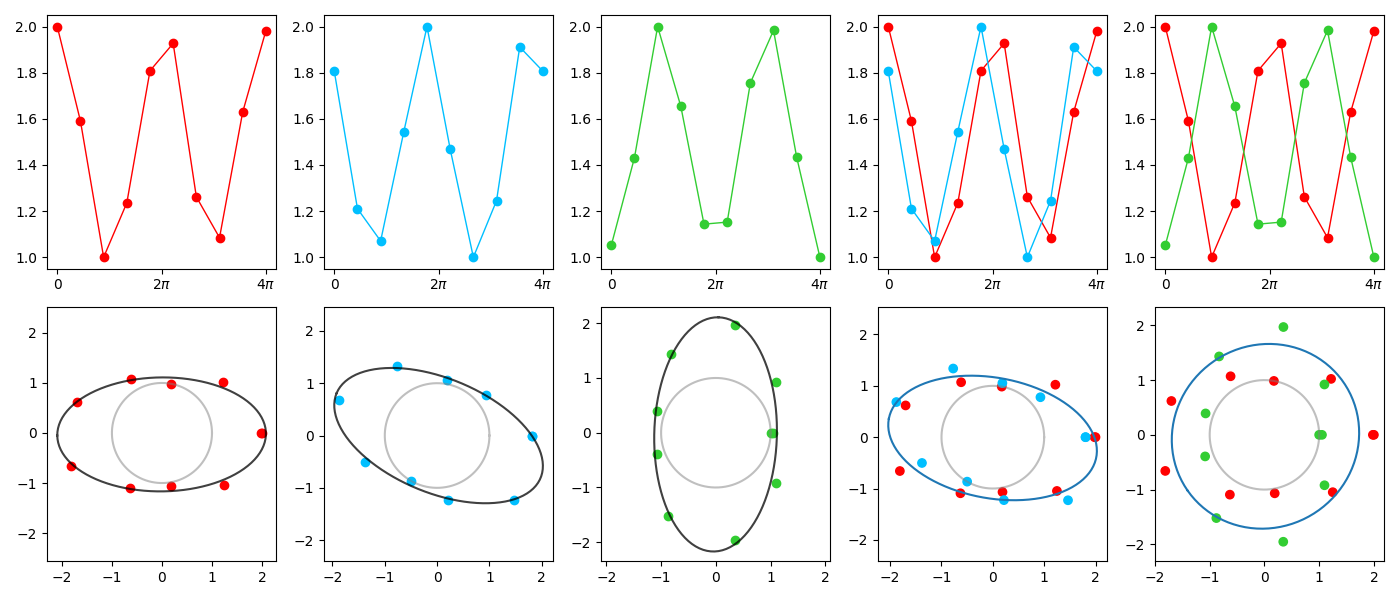}
\caption{\textit{(Left to right) Periodic data $X_0$, $X_{\pi/4}$, $X_\pi$, $X_0\cup X_{\pi/4}$, and $X_0\cup X_\pi$. The first row is the scaled data; the second is its polar transformation with the unit circle in gray for reference. Ellipsoids are fit with CTEF. Note the different axis ratios in the fourth and fifth columns.}}
\label{fig:periodic}
\end{figure*}


\subsection{Rosenbrock.}\label{sec:rosenbrock}


The Rosenbrock function \cite{rosenbrock1960} is often used to test optimization algorithms due to its simple yet numerically challenging geometry. In this example we fit an ellipsoid to data generated from a $3$-dimensional hybrid Rosenbrock model \cite{pagani2022} with parameters from \cite[Section 5.2]{song2022}. Specifically, $2{,}000$ data points $X=\{x^{(i)}\}$ are drawn from the probability density on $\mathbb{R}^3$ proportional to
\begin{align*}
	\exp\left(-x_1^2-30(x_2-x_1^2)^2-(x_3-x_2^2)^2\right).
\end{align*}
Exploratory analyses, namely visualization and PCA, indicate data concentrate near a $2$-dimensional ellipse. Our goal therefore is to fit $X$ with a $2$-dimensional ellipse using one of $V^{(1,2)}_2=(v_1,v_2)$, $V^{(1,3)}_2=(v_1,v_3)$, or $V^{(2,3)}_2=(v_2,v_3)$ in place of $V_k$ as described above, where $v_i$ is the $i$th principal component of the centered data. To choose which pair to use, we ran CTEF in each case and picked the $(i,j)$ that minimizes the loss \eqref{eq:loss} with $V^{(i,j)}_2$ in place of $V$. The left panel in \Cref{fig:rosenbrock_loss_compare} shows $V^{(1,3)}_2$ is best in this regard, a finding that is strongly supported by plotting the corresponding ellipsoids of best fit. In particular, using the first two principal components is suboptimal. This reflects the fact that, while variance in $X$ is maximized among all $2$-dimensional subspaces when data are projected onto the subspace spanned by $v_1$ and $v_2$, the most pronounced curvature in the PCA transformed data lies in the $(v_1,v_3)$ plane. 

\begin{figure*}
\centering
\includegraphics[width=.8\textwidth, height=.25\textheight]{./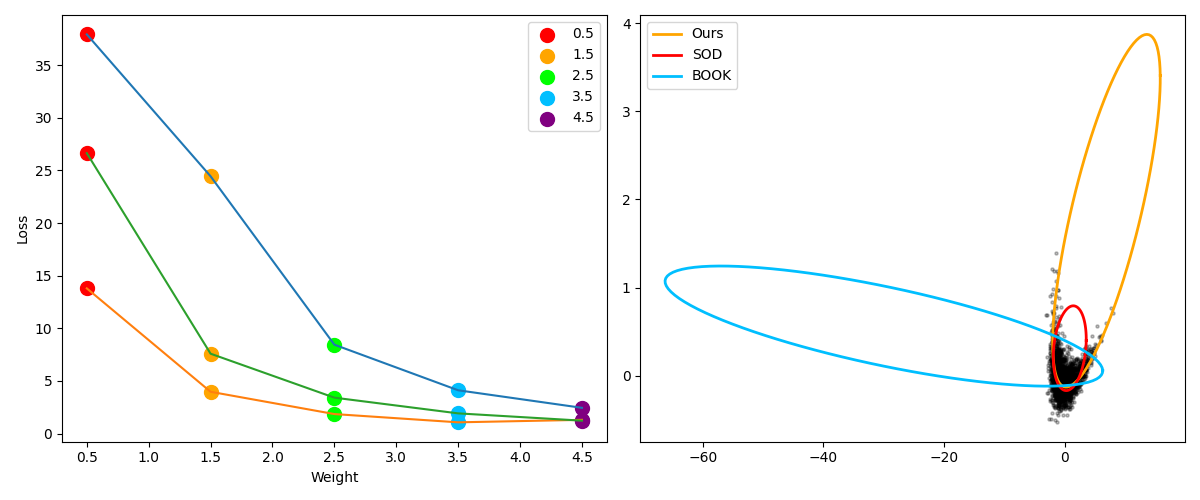}
\caption{\textit{(Left) Value of loss \eqref{eq:loss} corresponding to $V^{(1,2)}_2$ (blue line), $V^{(1,3)}_2$ (orange line), and $V^{(2,3)}_2$ (green line), for feasible set weights $w\in\{0.5,1.5,2.5,3.5,4.5\}$. $V^{(1,3)}_2$ yields the smallest loss for each weight. The ``scree" nature of this plot indicates the ellipses corresponding to $1.5$ or $2.5$ are likely best. (Right) Black dots are Rosenbrock data $X$ projected onto the subspace spanned by the first and third principal components, $v_1$ and $v_3$, of the data. We then fit ellipses using the methods discussed in this paper. FC is omitted because it returned an ellipse whose largest axis length was magnitudes larger than the ones shown. CADMM is omitted because it failed to return any solution at all. This agrees with our finding in \Cref{sec:experiments} that other methods perform poorly when data are not uniformly distributed over an ellipse.}}
\label{fig:rosenbrock_loss_compare}
\end{figure*}



\subsection{Cell cycle.}\label{sec:cell}


The \textit{cell cycle} is the process of cell division. It progresses through $4$ phases: $G_1$ (gap 1), $S$ (DNA synthesis), $G_2$ (gap 2), and $M$ (mitosis), then back to $G_1$. In \cite{stallaert2022} the authors identify $p=40$ core cell cycle features based on studies of $n=8{,}850$ individual cells. One of their principal goals is visualization. \Cref{fig:cell} shows two views of their data obtained as follows. First, let $X=\{x^{(i)}\}$ be the projection of the data onto its first $k=3$ principal components. Second, apply CTEF (or any ellipsoid fitting algorithm) to get the parameters $A$, $R$, and $c$ of the ellipsoid of best fit for $X$. The plotted points $\{y^{(i)}\}$ are then
\begin{align*}
	y^{(i)} &= R^TA^{-1}\left(\frac{AR(x^{(i)}-c)}{\lVert AR(x^{(i)}-c)\rVert}\right) + c.
\end{align*}
The term in parentheses is the closest unit vector to $AR(x^{(i)}-c)$. Multiplying by $R^TA^{-1}$ and translating by $c$ then maps it to the ellipsoid of best fit (\Cref{eq:ellipsoid}).

In addition to accurately grouping the $4$ cell cycle phases and preserving the cyclic ordering $G_1\rightarrow S\rightarrow G_2\rightarrow M\rightarrow G_1$, the fit in \Cref{fig:cell} also captures relative variability of each phase, with $G_1$ the longest lasting, $S$ and $G_2$ having similar time frames, and $M$ the most brief \cite{israels2000}. \Cref{fig:cell_compare} shows $2$-dimensional representations of the data using SOD and BOOK in place of CTEF, as well as other dimension reduction methods, namely PCA, tSNE \cite{vanderMaaten2008}, UMAP \cite{mcinnes2018}, Isomap \cite{balasubramanian2002}, and locally linear embedding (LLE) \cite{roweis2000}. In \Cref{sec:discussion} we discuss several of these methods and compare them to ours.

\begin{figure*}
\centering
\includegraphics[trim={0 7em 0 0}, clip, width=.8\textwidth, height=.4\textheight]{./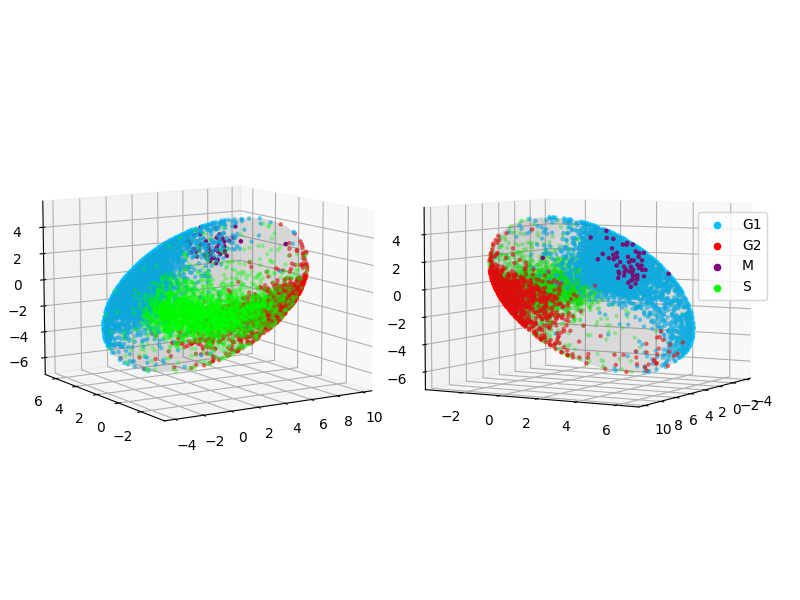}
\caption{\textit{Two views of $40$-dimensional cell cycle data fitted to an ellipsoid ($n=8{,}850$).}}
\label{fig:cell}
\end{figure*}

\begin{figure*}
\centering
\includegraphics[width=1\textwidth, height=.35\textheight]{./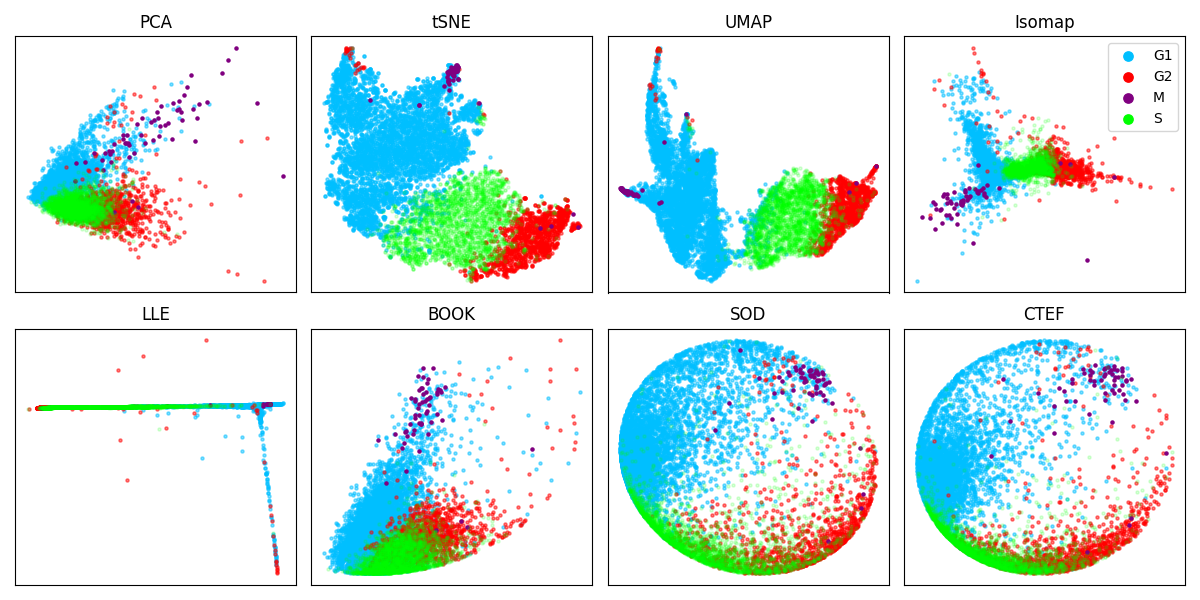}
\caption{\textit{Different dimension reduction techniques applied to $40$-dimensional cell cycle data. Plots for the ellipsoid fitting methods (BOOK, SOD, and CTEF) are obtained by projecting the $3$-dimensional fits described in \Cref{sec:cell} onto the first two standard coordinate axes of $\mathbb{R}^3$. CTEF, SOD, and the omitted FC, HES, and CADMM all perform similarly on this data. BOOK returns a long, narrow ellipsoid ill-suited to visualization and TAUB (not shown) fails to return an elliptic solution.}}
\label{fig:cell_compare}
\end{figure*}


\subsection{Circadian rhythm.}\label{sec:circadian}


Circadian rhythms are internal biological processes that cycle in approximately $24$ hour periods. They are found in all organisms and play a significant role in health and medicine, including in metabolic disorders, cardiovascular disease, and cancer \cite{rijo2019}. In \cite{zhang2014} the authors use RNA-sequencing to measure expression levels of $37{,}310$ genes in $12$ mouse organs\footnote{Adrenal gland, aorta, brainstem, brown fat, cerebellum, heart, hypothalamus, kidney, liver, lung, skeletal muscle, white fat} once every $6$ hours for $42$ hours. That is, for each gene and each organ there are $8$ observations of that gene's expression level, one at $0$, $6$, $12$, $18$, $24$, $30$, $36$, and $42$ hours. Genes whose expression levels oscillate with a period of $24$ hours are called \textit{circadian}. While each individual organ was found to have numerous circadian genes, e.g. $3{,}186$ in the liver, only $10$ circadian genes\footnote{Arntl, Dbp, Nr1d1, Nr1d2, Per1, Per2, Per3, Tsc22d3, Tspan4, Usp2} were found to synchronize across all $12$ organs. \Cref{fig:two_genes} shows data for one of these $10$ genes, Per3, and a randomly selected gene that is not one of the $10$. In \Cref{fig:circadian_full,fig:circadian_half} we apply the ellipsoid method for periodic data discussed above with various fitting algorithms (HES is omitted because it is identical to SOD in $2$ dimensions \cite{kesaniemi2018}). \Cref{fig:circadian_full} shows most methods perform similarly and that ellipsoids fit to the $10$ distinguished genes (red dots) have larger axis ratios than nearly all of the $1{,}000$ randomly selected genes. When only half the time-points are retained in \Cref{fig:circadian_half} (one $24$ hour cycle) only CTEF and, to a lesser extent, SOD perform well, which agrees with our experiments in \Cref{sec:results}.

\begin{figure*}
\centering
\includegraphics[width=.875\textwidth, height=.35\textheight]{./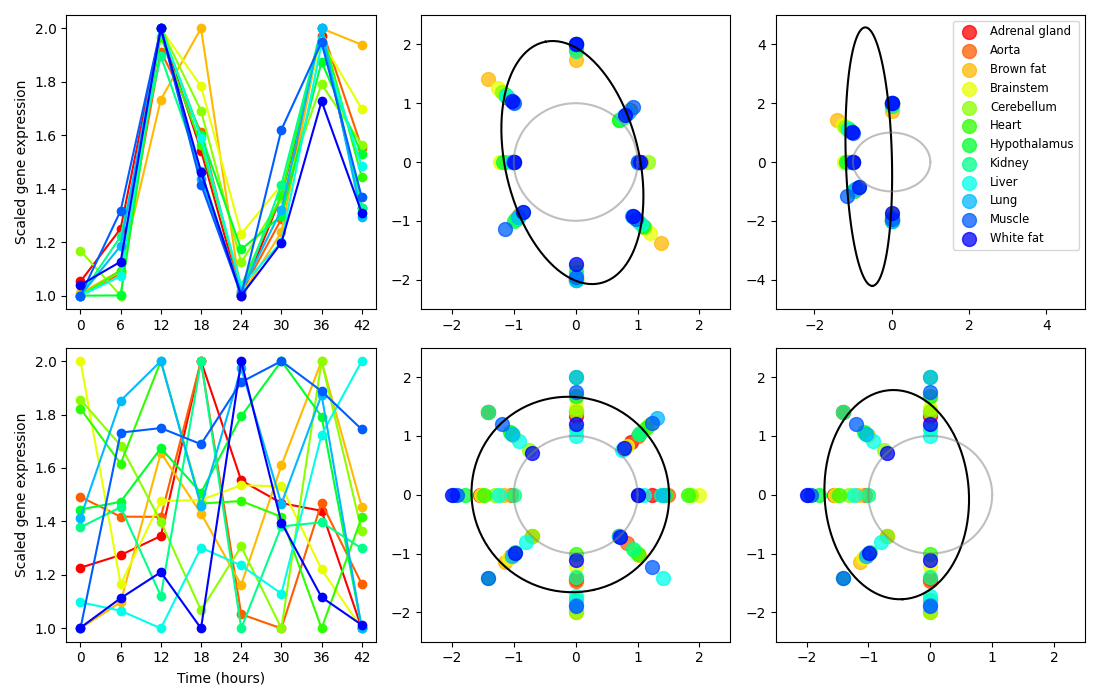}
\caption{\textit{(Left) Scaled gene expression levels for two genes over a $42$ hour period. (Center) Polar representations of the full data. (Right) Polar representations of the data between $12$ and $36$ hours. Ellipsoids are fit with CTEF. Colors represent different organs. The first row is one of the genes (Per3) that synchronizes across all organs; the second row is a gene selected at random. The unit circle is in gray for reference.}}
\label{fig:two_genes}
\end{figure*}

\begin{figure*}
\centering
\includegraphics[width=\textwidth, height=.25\textheight]{./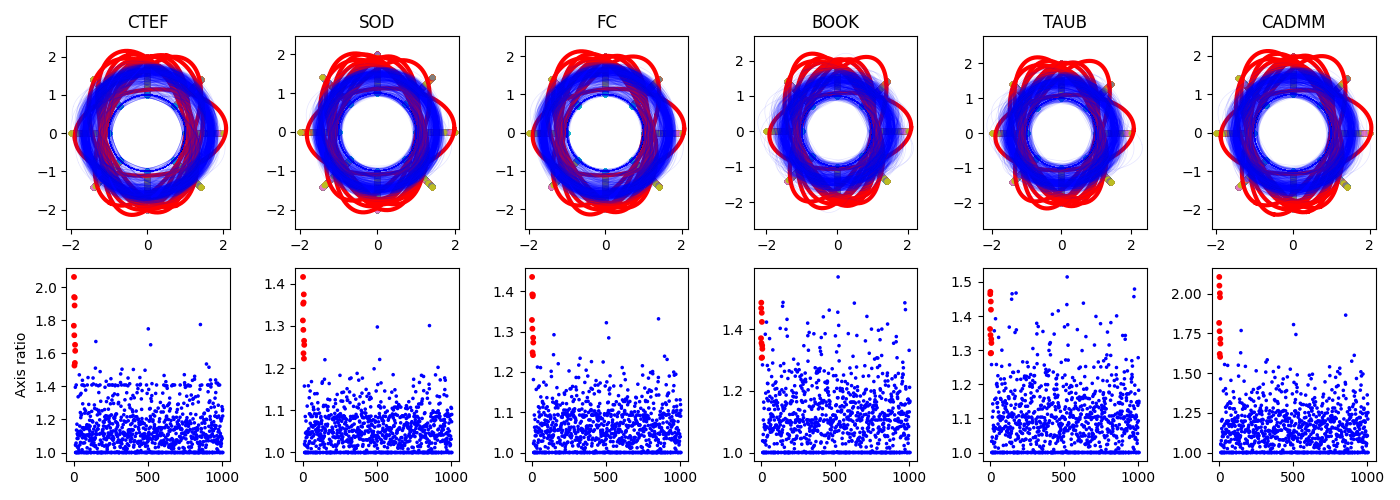}
\caption{\textit{Results from the analysis in \Cref{fig:two_genes} but for $1{,}010$ genes. (Top) Ellipsoids fit to each gene using various fitting methods. (Bottom) Axis ratios for each of the $1{,}010$ fits. In all figures red corresponds to the $10$ synchronized genes, and blue to the $1{,}000$ randomly selected ones. The gene Nfil3 with the largest axis ratio among the randomly selected ones for CTEF, SOD, FC, and CADMM (index $857$ in the plots) is also a known circadian gene \cite{kubo2020}. Indeed, plotting the data for this gene as in \Cref{fig:two_genes} indicates Nfil3 also synchronizes across all $12$ organs.}}
\label{fig:circadian_full}
\end{figure*}

\begin{figure*}
\centering
\includegraphics[width=\textwidth, height=.25\textheight]{./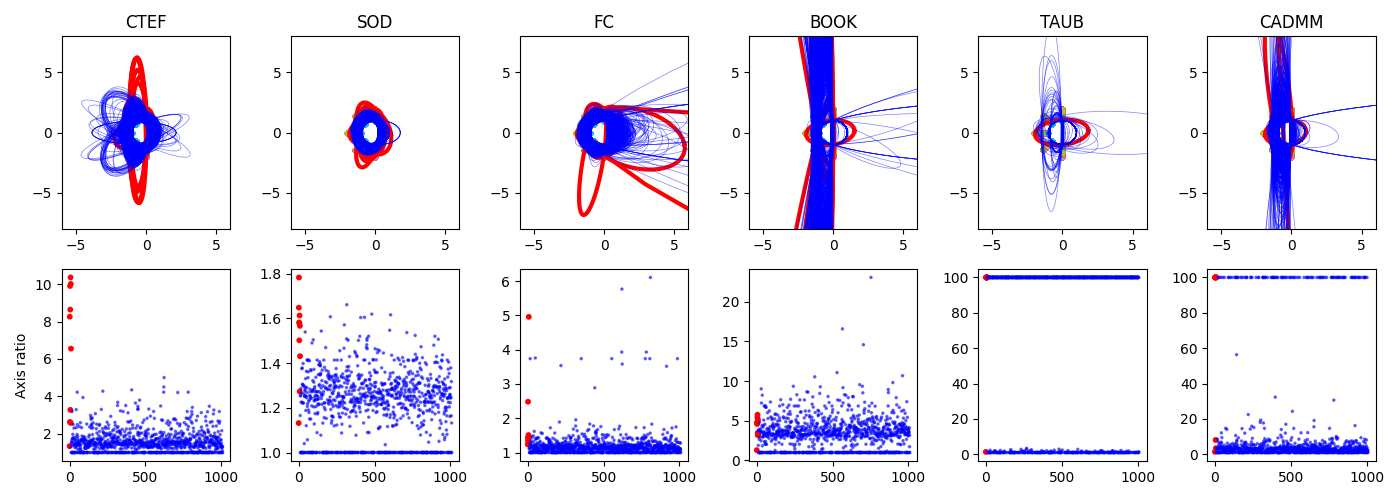}
\caption{\textit{The same plots as in \Cref{fig:circadian_full}, but with only observations at hours $12$, $18$, $24$, $30$, and $36$ included. Due to its ability to fit partial data, CTEF is still able to identify $6$ of the $10$ synchronized genes. The next best method in this regard, SOD, only definitively identifies $1$. The cutoff axis ratio value of $100$ shows TAUB and CADMM are frequently degenerate in this nonuniform data setting. Here $w_{ax}=1.5$.}}
\label{fig:circadian_half}
\end{figure*}


\subsection{Clustering}\label{sec:clustering}


\Cref{alg:clustering} gives an ellipsoid-based clustering algorithm similar to \cite[Algorithm 2]{paul2020}. The procedure is simple: Given $X=\{x^{(i)}\}_{i=1}^n\subseteq\mathbb{R}^p$ with cluster assignments $\kappa=(\kappa_1,\dots,\kappa_n)$ (so $x^{(i)}$ belongs to cluster $\kappa_i$), at each step do (1) fit an ellipsoid $\mathcal{E}_j=(A_j,R_j,c_j)$ to data $\{x_i:\kappa_i=j\}$ in the $j$th cluster, then (2) for each $i$ and $j$ assign $x^{(i)}$ to the cluster that minimizes the residual error $(\lVert A_jR_j(x^{(n)}-c_j)\rVert^2-1)^2$. Since the emphasis of this paper is on ellipsoid fitting, we leave detailed analysis of \Cref{alg:clustering} to future work. However, preliminary experiments shown in \Cref{fig:cluster2,,fig:cluster3} are promising: Ours is the only algorithm that gives good results in both cases relative to clustering algorithms implemented by the scikit-learn Python package \cite{sklearn2011}.

\begin{algorithm}
\caption{Ellipsoid clustering}\label{alg:clustering}
\begin{algorithmic}[1]
\Require{Data $X\in\mathbb{R}^{n\times p}$, number of clusters $n_c$, number of steps $n_s$}
\State Initialize clusters $\kappa=(\kappa_1,\dots,\kappa_n)\in\{1,\dots,n_c\}^n$
\State $i \gets 0$
\While{$i < n_s$}
	\For{$0\leq j < n_c$}
		\State $\mathcal{E}_j\gets \text{ellipsoid\_fit}(\{x_m:\kappa_m=j\})$
	\EndFor
	\For{$1\leq m\leq n$}
		\State $\kappa_m\gets \argmin_j \left(\lVert A_jR_j(x_m-c_j)\rVert^2-1\rVert\right)^2$
	\EndFor
	\State $i\gets i+1$
\EndWhile
\Ensure{Clusters $\kappa$, parameters $\{A_j,R_j,c_j\}_{j=1}^{n_c}$}
\end{algorithmic}
\end{algorithm}

%

\begin{figure*}
\centering
\includegraphics[width=\textwidth, height=.18\textheight]{./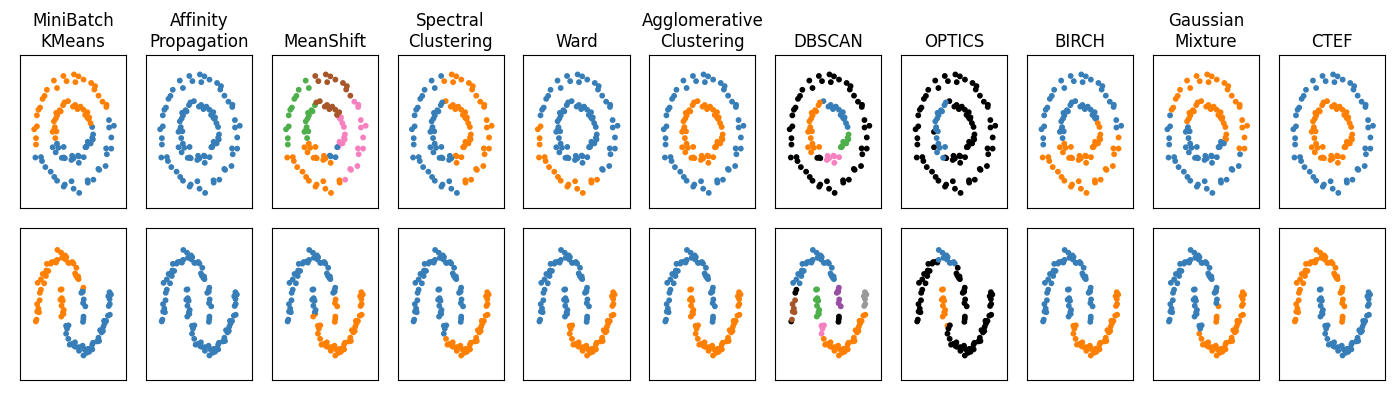}
\caption{\textit{Clustering $100$ samples from noisy circles and moons data. Code and results for all algorithms other than CTEF are exactly from \href{https://scikit-learn.org/stable/auto_examples/cluster/plot_cluster_comparison.html}{https://scikit-learn.org/stable/auto\_examples/cluster/plot\_cluster\_comparison.html} \cite{sklearn2011}. They state ``parameters of each of these dataset-algorithm pairs has been tuned to produce good clustering results." Despite this, only agglomerative clustering and CTEF accurately cluster both datasets.}}
\label{fig:cluster2}
\end{figure*}

\begin{figure*}
\centering
\includegraphics[width=\textwidth, height=.18\textheight]{./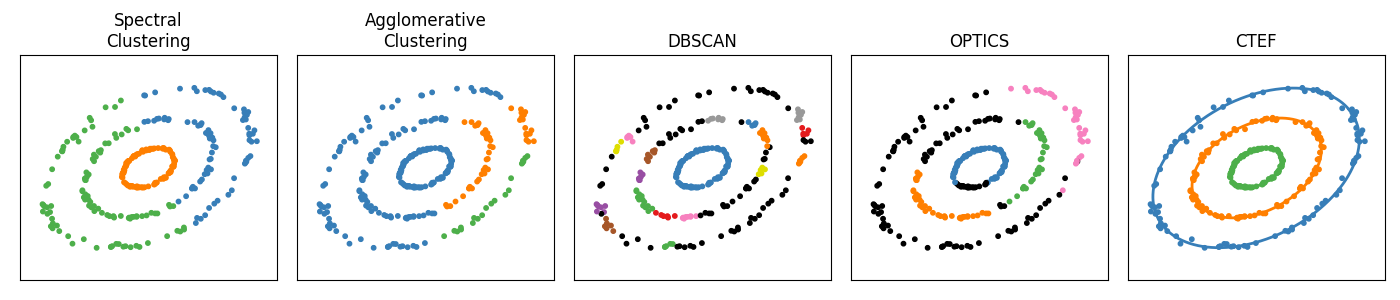}
\caption{\textit{Clustering $300$ samples from $3$ noisy ellipses. Methods in \Cref{fig:cluster2} not shown here performed poorly on these data and are omitted solely for ease of visualization. The $3$ ellipsoids returned by CTEF clustering are pictured in the CTEF panel. Thus, in addition to better clustering, CTEF returns a simple, closed-form parametrization of the ellipses corresponding to each cluster.}}
\label{fig:cluster3}
\end{figure*}


\section{Discussion}\label{sec:discussion}


We have introduced CTEF, a method that uses the Cayley transform to recast ellipsoid fitting as a bound-constrained minimization problem in Euclidean space. Unlike many existing methods, CTEF is ellipsoid specific, can fit arbitrary ellipsoids, and is invariant under rotations and translations of data. In addition, experiments in \Cref{sec:experiments} and the Rosenbrock and circadian rhythm examples in \Cref{sec:applications} indicate CTEF significantly outperforms other fitting methods when data concentrate near a proper subset of an ellipsoid.

In \Cref{sec:applications} we saw examples of how CTEF -- and ellipsoid fitting in general -- can be used for dimension reduction, visualization, and clustering. Aside from the number of components in PCA-based dimension reduction and the number of clusters in our clustering algorithm, the only tuning parameters in all cases were $w$ and $w_{ax}$ which have the clear geometric interpretations of rectangle size in which the ellipsoid center must lie and maximum axis length, respectively. Furthermore, CTEF is robust to $w$ and especially to $w_{ax}$ with $w$ reliably and automatically tuned with a simple linear regression model and $w_{ax}$ often requiring no tuning at all. This stands in stark contrast to many popular dimension reduction (DR) and clustering algorithms that require careful tuning of parameters. As stated recently in \cite{huang2022}, ``\textit{DR methods often become widely used without being carefully evaluated, and these methods may contain flaws that are unknown to their users. As further evidence of this, there are now many papers explaining how to use various popular DR methods effectively...These papers are only necessary because DR results are often misleading, and because DR cannot be trusted out-of-the box. These papers...highlight the urgent need to develop trustworthy DR methods.}" The authors go on to demonstrate tSNE and UMAP can be highly sensitive to parameter choices, can behave differently under different random seeds (making results difficult to reproduce), and can fail to capture global structure in data. Since CTEF is deterministic it is entirely reproducible; once the feasible set and initial condition are specified, it always returns the same result. Furthermore, as our examples show, ellipsoid fitting methods can capture global curvature and/or cyclic patterns in data. This is further evidenced by the relative success of ellipsoid clustering in \Cref{sec:clustering}. Like dimension reduction, many clustering algorithms are sensitive to potentially unintuitive parameter choices and can fail to capture global structure due to dependence on local parameter specifications such as number of nearest neighbors. For example, it is particularly clear in \Cref{fig:cluster3} that all methods besides ours only capture local curvature, failing to distinguish the three globally distinct ellipses that are abundantly clear to the naked eye. 


\subsection{Future work}\label{sec:future}


We see two primary directions for future work. In terms of the fitting algorithm itself, there is likely room for improvement in computational efficiency. In \cite{kesaniemi2018} the authors suggest combining direct methods such as SOD, FC, and BOOK with iterative methods such as CADMM and now CTEF. Since direct methods are faster, this could reduce the time complexity. It would also be useful to further investigate the feasible set, specifically the weight $w$, which can play an important role when data are nonuniform. While the linear regression model for choosing $w$ worked well in experiments, there are likely better models for automatic selection of $w$.

The other main line of future work is to extend the results of \Cref{sec:applications}. While we make no claim that ellipsoid fitting is a silver bullet -- surely there are data on which it would perform poorly -- our examples indicate it is a promising tool for capturing nonlinearities and global features in data that are missed by many popular dimension reduction and clustering algorithms. As an immediate next step it is interesting to test CTEF on various data that are known or suspected to be curved or cyclic in nature. In the context of dimension reduction, one can replace our initial PCA step with a different dimension reduction method before ellipsoid fitting. Similarly, we initiate our clustering algorithm with $k$-means. An obvious modification is to initiate with a different clustering algorithm to improve performance. Finally, it is good to establish further theory for ellipsoid fitting as a dimension reduction and clustering method, both to obtain rigorous guarantees and shed light on practical limitations and viable enhancements. \\

\textit{Code availability}. Code for this work is available at \href{https://github.com/omelikechi/ctef}{https://github.com/omelikechi/ctef}.


\section*{Acknowledgments}


The authors thank Andrea Agazzi, Didong Li, Katerina Papagiannouli, and Hanyu Song for helpful conversations and the authors of \cite{kesaniemi2018,lin2016} for providing their code. This project has received funding from the European Research Council (ERC) under the European Union’s Horizon 2020 research and innovation programme (grant agreement No 856506). It is also supported by National Institutes of Health grants R01-ES027498 and R01-ES028804 and Office of Naval Research grant 00014-21-1-2510-P00001. OM also thanks NSF-DMS-2038056 for partial support.

\bibliographystyle{abbrv}
\bibliography{/Users/omarmelikechi/iCloud/math/library/refs/refs.bib}


\end{document}